\newtheorem{hproblem}{\bf Problem}
\newtheorem{hproposition}{\bf Proposition}
\newtheorem{hlemma}{\bf Lemma}
\newtheorem{hremark}{\it Remark}
\newtheorem{hproof}{\it Proof}
\newenvironment{proposition}{\begin{hproposition}\it}{\end{hproposition}}
\newenvironment{remark}{\begin{hremark}}{\hfill$\triangleleft$\end{hremark}}
\newenvironment{proof}{\begin{hproof}\it}{\end{hproof}}
 \title{\LARGE \bf
Bipedal locomotion using variable stiffness actuation%
\thanks{This work has been partly funded by the European Commission's Seventh
        Framework Programme as part of the project VIACTORS under grant
        no. 231554.}
}
\author{Ludo C. Visser, Stefano Stramigioli, and Raffaella Carloni%
\thanks{\{l.c.visser, s.stramigioli, r.carloni\}@utwente.nl,
        MIRA Institute, Faculty of Electrical Engineering, Mathematics and Computer Science,       
        University of Twente, The Netherlands.}
}
\begin{document}

\maketitle
\thispagestyle{empty}
\pagestyle{empty}

%%%%%%%%%%%%%%%%%%%%%%%%%%%%%%%%%%%%%%%%%%%%%%%%%%%%%%%%%%%%%%%%%%%%%%%%%%%%%%%%
\begin{abstract}
Robust and energy-efficient bipedal locomotion in robotics is still a challenging topic. In order to address issues in this field, we can take inspiration from nature, by studying human locomotion. The Spring-Loaded Inverted Pendulum (SLIP) model has shown to be a good model for this purpose. However, the human musculoskeletal system enables us to actively modulate leg stiffness, for example when walking in rough terrain with irregular and unexpected height variations of the walking surface. This ability of varying leg stiffness is not considered in conventional SLIP-based models, and therefore this paper explores the potential role of active leg stiffness variation in bipedal locomotion. It is shown that the conceptual SLIP model can be iteratively extended to more closely resemble a realistic (i.e., non-ideal) walker, and that feedback control strategies can be designed that reproduce the SLIP behavior in these extended models. We show that these extended models realize a cost of transport comparable to human walking, which indicates that active leg stiffness variation plays an important role in human locomotion that was previously not captured by the SLIP model. The results of this study show that active leg stiffness adaptation is a promising approach for realizing more energy-efficient and robust bipedal walking robots.
\end{abstract}

%%%%%%%%%%%%%%%%%%%%%%%%%%%%%%%%%%%%%%%%%%%%%%%%%%%%%%%%%%%%%%%%%%%%%%%%%%%%%%%%
\section{Introduction}
\label{sec:introduction}

Robust and energy-efficient bipedal locomotion in robotics is an interesting research topic with many open questions. In particular, on one side of the spectrum, robust bipedal robots are being developed, but without much consideration for energy efficiency \cite{BostonDynamics2011}. On the other side of the spectrum, extremely efficient bipedal locomotion is being achieved by exploiting passive robot dynamics \cite{Collins2005}. However, the gaits of such passive dynamic walkers lack robustness \cite{Schwab2001}.

In contrast, human walking is both robust and energy efficient, and, therefore, a better understanding of human walking could aid the design of robots achieving similar performance levels. To develop this understanding, models have been proposed that capture the essential properties of human gaits. One remarkably simple model is the bipedal spring-loaded inverted pendulum (SLIP) model, proposed in \cite{Geyer2006}. Despite its simplicity, the model accurately reproduces the hip trajectory and ground reaction force profiles observed in human gaits. Furthermore, the model encodes a wide variety of gaits, ranging from slow walking to running \cite{Rummel2010b}. 

It has been shown that the bipedal SLIP model can be used to generate reference gaits for a fully actuated bipedal robot \cite{Garofalo2012,Ketelaar2013}. However, the human musculoskeletal system enables the leg stiffness to be varied continuously, in order to adapt to different gaits and terrains. These stiffness variations also play a role in disturbance rejection, for example in uneven terrain with sudden and unexpected changes in the walking surface. Variable stiffness in the legs of the bipedal SLIP walker has been shown to increase energy efficiency \cite{vanderLinde1998,Ghorbani2008} and improve robustness \cite{Jafarian2011}, but continuous leg stiffness adaptation has not yet been considered. Recent advances in the field of variable stiffness actuators, a class of actuators that allow the actuator output stiffness to be changed independently from the actuator output position \cite{Vanderborght2013}, are enabling the realization of robotic walkers with physically variable leg stiffness. Therefore, further research into bipedal walking with variable leg stiffness should be pursued, with the aim of getting closer to realizing bipedal walking robots with human-like performance characteristics.

In this work, we explore modeling and control of bipedal walking with controllable leg stiffness. The aim of this work is to show that SLIP-like walking behavior can be embedded in more sophisticated models of bipedal walkers. This is achieved by the development of control strategies based on the principles of leg stiffness variation, inspired by the capabilities of the human musculoskeletal system. Starting from the conventional bipedal SLIP model, we iteratively extend this model, first by making the leg stiffness controllable, then by adding a swing leg and its dynamics, and then further by including a knee in the swing leg. In parallel, we derive a control strategy, which is extended with each model iteration. For each iteration, we show the stabilizing properties of the controller, demonstrating that the controller derived for the SLIP model in the first iteration is sufficiently robust to handle the increasingly complex dynamics in subsequent iterations. The final model and controller can serve as a template for bipedal robot control strategies.

The paper is organized as follows. In Section~\ref{sec:slip}, we revisit the bipedal SLIP model, as presented in \cite{Geyer2006}, and analyze its dynamics. Then, in Section~\ref{sec:vslip}, we extend the bipedal SLIP model to have controllable stiffness (the V-SLIP model, for Variable SLIP), and derive a control strategy that renders a natural gait of the SLIP model asymptotically stable. In Section~\ref{sec:swing}, the controlled V-SLIP model is extended to include feet, with the aim of introducing swing leg dynamics. The V-SLIP control strategy is extended to handle the extra degrees of freedom, and the stabilizing properties of the controller are demonstrated. The swing leg model is further refined in Section~\ref{sec:knee} by adding knees, with again further extension of the control strategy. A comparison of the models and their controllers is presented in Section~\ref{sec:simulation}, and Section~\ref{sec:conclusions} concludes the paper with a discussion and final remarks.

\subsection*{Conventions in Notation}

To avoid notational clutter, variable names will be reused for different models. However, this reuse is consistent, and variables with the same name indicate the same quantity in the various models. For example, $q_i$ denotes configuration variables, and $p_i$ denotes momentum variables, state vectors are named $x$, and $f(x)$ and $g_i(x)$ are drift and input vector fields on the state manifold respectively. The Lie-derivative of a function $h$ along a vector field $X$ is denoted by $\mathcal{L}_X h$. Function arguments are omitted where this is considered possible.

%%%%%%%%%%%%%%%%%%%%%%%%%%%%%%%%%%%%%%%%%%%%%%%%%%%%%%%%%%%%%%%%%%%%%%%%%%%%%%%%
\section{The Bipedal SLIP Model}
\label{sec:slip}

In this Section, we revisit the bipedal SLIP model, as presented in \cite{Geyer2006}. The model is depicted in Figure~\ref{fig:slip_model}. It consists of a point mass $m_h$, located in the joint connecting the two legs, i.e. the hip joint. The legs consist of massless telescopic springs of stiffness $k_0$ and rest length $L_0$. The configuration variables $(q_1,q_2) =: q$ describe the planar position of the point mass, with $(p_1,p_2) =: p$ the associated momentum variables. In the following, we derive the dynamic equations for this system, and analyze its dynamics.

\begin{figure}
\centering
\begin{pspicture}(7,5.4)
%\psgrid
 
 \rput[bl]{0}(0.2,0.2){%
    \includegraphics[width=0.75\columnwidth]{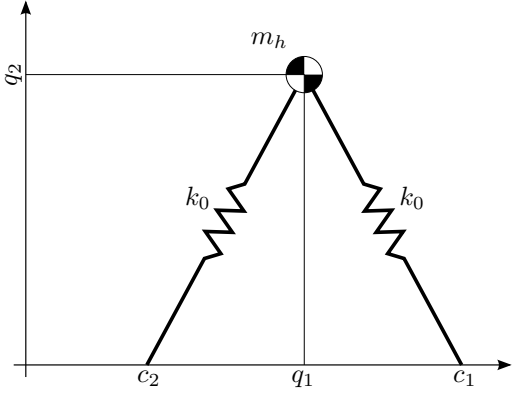}
 }
 
 \rput{0}(4.05,0.2){$q_1$}
 \rput{90}(0.2,4.25){$q_2$}
 \rput{0}(3.6,4.7){$m_h$}
 \rput{0}(2.65,2.6){$k_0$}
 \rput{0}(5.5,2.6){$k_0$}
 \rput{0}(2,0.2){$c_2$}
 \rput{0}(6.2,0.2){$c_1$}
\end{pspicture}
\caption{The bipedal SLIP model---The model consists of a point mass $m_h$, located in the hip joint, i.e. where two massless telescopic springs, with a constant spring stiffness $k_0$ and rest length $L_0$, are connected. The configuration variables $(q_1,q_2)$ describe the position of the hip.}
\label{fig:slip_model}
\end{figure}

\subsection{System Dynamics}
\label{subsec:slip_dynamics}

The bipedal SLIP model shows, for appropriately chosen initial conditions \cite{Geyer2006,Rummel2010b}, a passive walking gait as illustrated in Figure~\ref{fig:slip_gait}. In order to derive the dynamic equations that describe the gait of this model, two phases need to be considered: 1) two legs are in contact with the ground (i.e. the double support phase), and 2) one leg is in contact with the ground (i.e. the single support phase). Furthermore, we consider the parameter $\alpha_0$, which is the angle at which the massless leg touches down at the end of the single support phase, as indicated in Figure~\ref{fig:slip_gait}.

\begin{figure}
\centering
\begin{pspicture}(8.4,4.8)
 %\psgrid
 
 \rput[bl]{0}(0.2,0.2){%
  \includegraphics[width=3.15in]{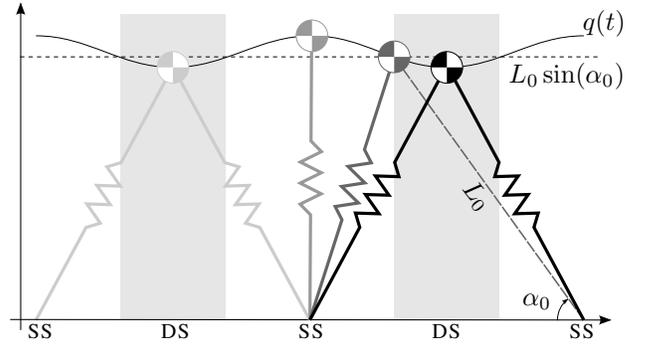}
 }
 
 \rput{-52}(6.4,2){$L_0$}
 \rput{0}(7.2,0.6){$\alpha_0$}
 
 \rput{0}(8.1,4.3){$q(t)$}
 \rput{0}(7.6,3.6){$L_0 \sin (\alpha_0)$}
 
 \rput{0}(0.6,0.2){\textsc{ss}}
 \rput{0}(2.4,0.2){\textsc{ds}}
 \rput{0}(4.2,0.2){\textsc{ss}}
 \rput{0}(6.0,0.2){\textsc{ds}}
 \rput{0}(7.8,0.2){\textsc{ss}}
\end{pspicture}
\caption{Passive gait of the bipedal SLIP model---The model alternates between single support (\textsc{ss}) and double support (\textsc{ds}) phases, depending on the hip height and the model parameters $L_0$ and $\alpha_0$. The gray shading will be used throughout this paper to indicate that the walker is in the double support phase.}
\label{fig:slip_gait}
\end{figure}

The contact conditions are determined by the spring rest length $L_0$ and angle of attack $\alpha_0$, as shown in Figure~\ref{fig:slip_gait}. In particular, if the system is in the single support phase, the touchdown event of the trailing leg occurs when
\begin{equation}\label{eq:vslip_td}
q_2 = L_0 \sin (\alpha_0),
\end{equation}
and at this moment the foot contact point $c_2$ is calculated as
\begin{equation*}
c_2 = q_1 + L_0 \cos (\alpha_0).
\end{equation*}
Conversely, when the system is in the double support phase, the transition to the single support phase occurs when either of the two springs reaches its rest length with non-zero speed, and thus loses contact with the ground, i.e. when
\begin{equation}\label{eq:vslip_lo}
\sqrt{(q_1 - c_i)^2 + q_2^2} = L_0, \quad i=1,2.
\end{equation}
In nominal conditions, only the trailing leg is allowed to lift off, after which the contact point $c_2$ is relabeled as $c_1$ to correspond to the notation used for the single support phase.

In order to derive the dynamic equations, we define the kinetic energy function $K = \frac{1}{2}p^T M^{-1} p$, where 
\begin{equation}\label{eq:vslip_mm}
M = \mathrm{diag}(m_h,m_h)
\end{equation}
is the mass matrix and $p := M \dot{q}$ are the momentum variables. The potential energy function is defined as
\begin{equation*}
V = m_h g_0 q_2 + \frac{1}{2}k_0 (L_0 - L_1)^2 + \frac{1}{2}k_0 (L_0 - L_2)^2,
\end{equation*}
where $L_i := \sqrt{(q_1 - c_i)^2 + q_2^2}$, and $g_0$ is the gravitational acceleration.  During the single support phase, we set $L_2 \equiv L_0$ to eliminate the influence of this virtually swinging leg.

The dynamic equations in Hamiltonian form are defined through the Hamiltonian energy function $H = K + V$ and given by
\begin{equation}\label{eq:slip_dynamics}
\frac{\text{d}}{\text{d} t} \left[ \begin{array}{c} q \\ p \end{array} \right] =
\left[ \begin{array}{cc} 0 & I \\ -I & 0 \end{array} \right]
\left[ \begin{array}{c} \frac{\partial H}{\partial q} \smallskip \\
                        \frac{\partial H}{\partial p} \end{array} \right].
\end{equation}

From \eqref{eq:slip_dynamics}, it can be noted that the configuration variables $q(t)$ are of class $C^2$. This is due to the fact that the $\frac{\partial V}{\partial q}$ is not differentiable at the moment of phase transition. This is because the massless second leg does not have a zero rate of change of length at the moment of touchdown, i.e.
\begin{equation*}
\frac{\text{d}}{\text{d} t} L_2 \Big|_{t = t_\mathrm{touchdown}^+} \neq 0,
\end{equation*}
where $t_\mathrm{touchdown}^+$ indicates that the time-derivative is taken on the right of the discontinuity. It will be  shown later that this has consequences for the controller design.

%%%%%%%%%%%%%%%%%%%%%%%%%%%%%%%%%%%%%%%%%%%%%%%%%%%%%%%%%%%%%%%%%%%%%%%%%%%%%%%%
\section{The Controlled V-SLIP Model}
\label{sec:vslip}

The passive bipedal SLIP model provides no control inputs, and therefore the only way to influence its behavior is by the choice of initial conditions. Therefore,  it is proposed to extend the bipedal SLIP model to have massless telescopic springs with \textit{variable} stiffness \cite{Visser2012a}. This bipedal V-SLIP (for Variable SLIP) model is depicted in Figure~\ref{fig:vslip_model}. The difference with respect to the bipedal SLIP model is that the leg stiffness now has a controllable part, i.e. $k_i = k_0 + u_i,i=1,2$. In this Section we give the dynamic equations for this system and present a stabilizing controller.

\begin{figure}
\centering
\begin{pspicture}(7,5.4)
 %\psgrid
 
 \rput[bl]{0}(0.2,0.2){%
  \includegraphics[width=2.625in]{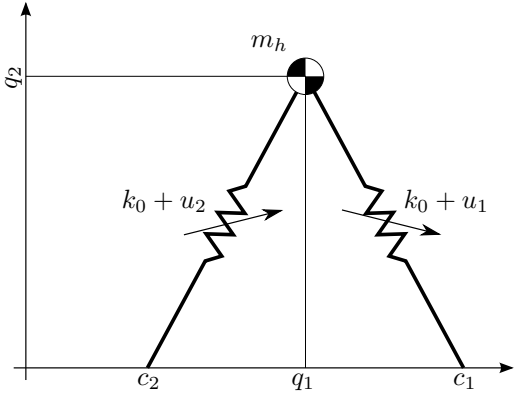}
 }
 
 \rput{0}(4.05,0.2){$q_1$}
 \rput{90}(0.2,4.25){$q_2$}
 \rput{0}(3.6,4.7){$m_h$}
 \rput{0}(2.2,2.6){$k_0 + u_2$}
 \rput{0}(5.95,2.6){$k_0 + u_1$}
 \rput{0}(2,0.2){$c_2$}
 \rput{0}(6.2,0.2){$c_1$}
\end{pspicture}
\caption{The V-SLIP model---In contrast to the bipedal SLIP model, the V-SLIP model has a controllable leg stiffness. This provides two control inputs during the double support phase, but only one control input during the single support phase, rendering the system underactuated.}
\label{fig:vslip_model}
\end{figure}

\subsection{System Dynamics}
\label{subsec:vslip_dynamics}
In deriving the dynamics of the V-SLIP model, we assume that:
\begin{itemize}
\item no slip or bouncing occurs in the foot contact points;
\item the springs are unilateral, meaning that we only allow them to be compressed;
\end{itemize}

The autonomous part of the dynamics of the bipedal V-SLIP model is the same as for the bipedal SLIP model. To include the control inputs, \eqref{eq:slip_dynamics} is extended, arriving at the dynamics for the V-SLIP model in port-Hamiltonian form:
\begin{eqnarray}
\frac{\text{d}}{\text{d} t}\left[ \begin{array}{c} q \\ p \end{array} \right] =
\left[ \begin{array}{cc} 0 & I \\ -I & 0 \end{array} \right]
\left[ \begin{array}{c} \frac{\partial H}{\partial q} \smallskip \\
                        \frac{\partial H}{\partial p} \end{array} \right] +
\left[ \begin{array}{c} 0 \\ B \end{array} \right] u \nonumber \\
y = \left[ \begin{array}{cc} 0 & B^T \end{array} \right]
\left[ \begin{array}{c} \frac{\partial H}{\partial q} \smallskip \\
                        \frac{\partial H}{\partial p} \end{array} \right],
\label{eq:vslip_dynamics}
\end{eqnarray}
with $u = (u_1, u_2)$ the controlled leg stiffness, and $H$ is as defined in Section~\ref{subsec:slip_dynamics}. The input matrix $B$ is given by
\begin{equation*}
B = \left[ \begin{array}{cc} \frac{\partial \phi_1}{\partial q_1} &
                             \frac{\partial \phi_2}{\partial q_1} \smallskip \\
                             \frac{\partial \phi_1}{\partial q_2} &
                             \frac{\partial \phi_2}{\partial q_2} \end{array}
    \right] ,
\end{equation*}
with
%\begin{equation*}
$\phi_1 = -\frac{1}{2} (L_0 - L_1)^2$ and $\phi_2 = -\frac{1}{2} (L_0 - L_2)^2$. %
%\end{equation*}
The output $y$ is dual to $u$, and it is readily verified that the dual product $\langle u | y \rangle$ has the units of power \cite{Duindam2009b}.

As in Section~\ref{subsec:slip_dynamics}, we set $L_2 \equiv L_0$ during the single support phase to eliminate the influence of the swing leg. It is emphasized that the control inputs $u_i,i=1,2$ are restricted, such that the total leg stiffness is physically meaningful, i.e.
\begin{equation}\label{eq:u_limit}
u_i \in \mathbb{R} \mid 0 < k_0 + u_i < \infty.
\end{equation}

\subsection{Controller Design}
\label{subsec:vslip_controller}

The bipedal SLIP model already shows stable walking gaits, with a relatively large basin of attraction \cite{Rummel2010b}. As shown in our previous work, it is possible to tune the spring stiffness $k$ to further increase the robustness of the gait, while minimally modifying the natural dynamics of the walker \cite{Visser2012a}.

The control strategy uses a natural gait of the bipedal SLIP model as reference, i.e. trajectories $(q^\circ(t), \dot{q}^\circ(t))$ that are a solution of \eqref{eq:slip_dynamics}, where $\dot{q}$ is defined as $\dot{q} = M^{-1} p$. However, the bipedal V-SLIP model is underactuated during the single support phase (since there is only one leg in contact with the ground), and thus it is not possible to track $(q^\circ(t), \dot{q}^\circ(t))$ exactly. To avoid that the walker lags behind the reference during the underactuated phase, we propose to instead define a curve in the configuration space by parameterizing $(q^\circ(t), \dot{q}^\circ(t))$ by the horizontal position $q_1$, similar to the approach presented in \cite{Shiriaev2005}. This is possible\footnote{Exact parameterization is not possible, because $q(t)$ is of class $C^2$ only, as outlined in Section~\ref{subsec:slip_dynamics}. Approximating $(q^\circ(t), \dot{q}^\circ(t))$ by finite Fourier series is an alternative that gives satisfactory results, as will be demonstrated.} as long as $\dot{q}_1 > 0$. Then, the desired reference gait can be equivalently described as $(q_2^\ast(q_1),\dot{q}_1^\ast(q_1))$. The control objective is to have the hip trajectory converge to an arbitrary small neighborhood of this reference gait.

In formulating the control strategy, we define the state $x = (q,p)$ and rewrite \eqref{eq:vslip_dynamics} in standard form as
\begin{equation}\label{eq:standard}
\dot{x} = f(x) + \sum_i g_i(x) u_i.
\end{equation}
The following control strategy is proposed.

\begin{proposition}\label{prop:vslip_control}
Given parameterized reference state trajectories $(q_2^\ast,\dot{q}_1^\ast)$, define the error functions
\begin{eqnarray*}
h_1 &= q_2^\ast - q_2, \\
h_2 &= \dot{q}_1^\ast - \dot{q}_1.
\end{eqnarray*}
Then the following control strategy renders solutions of \eqref{eq:vslip_dynamics} asymptotically converging to $(q_2^\ast,\dot{q}_1^\ast)$:
\begin{itemize}
\item during the single support phase, 
 \begin{equation}\label{eq:vslip_uss}
  u_1 = -\frac{1}{\mathcal{L}_{g_1} \mathcal{L}_f h_1} \left(
         \mathcal{L}_f^2 h_1 + \kappa_d \mathcal{L}_f h_1 + \kappa_p h_1 \right)
 \end{equation}
 and $u_2 \equiv 0$;
\item during the double support phase, when the leading leg length satisfies $L_0 - L_e \leq L_1 < L_0$ (i.e. just after the touchdown event), or when the trailing leg length satisfies $L_0 - L_e \leq L_2 < L_0$ (just before the lift-off event), for some small $L_e > 0$:
 \begin{equation}\label{eq:vslip_udse}
 \left[ \begin{array}{c} u_1 \\ u_2 \end{array} \right] = -A^\sharp
 \left( \mathcal{L}_f^2 h_1 + \kappa_d \mathcal{L}_f h_1 + \kappa_p h_1 \right),
 \end{equation}
 with
 \begin{equation*}
 A = \left[ \begin{array}{cc} \mathcal{L}_{g_1} \mathcal{L}_f h_1 & \mathcal{L}_{g_2} \mathcal{L}_f h_1 \end{array} \right],
 \end{equation*}
 and with ${}^\sharp$ denoting the Moore-Penrose\footnote{Since we are addressing a numerical issue, we are not concerned about deriving a proper invariant metric for defining the pseudo-inverse. Instead, we use the Euclidian metric.} pseudo inverse;
\item during the double support phase, when both leg lengths satisfy $L_i < L_0 - L_e$,
 \begin{equation}\label{eq:vslip_uds}
  \left[ \begin{array}{c} u_1 \\ u_2 \end{array} \right] = 
  -A^{-1}
  \left[ \begin{array}{c} \mathcal{L}_f^2 h_1 + \kappa_d \mathcal{L}_f h_1 + \kappa_p h_1 \\
                          \mathcal{L}_f h_2 + \kappa_v h_2 \end{array} \right],
 \end{equation}
 with
 \begin{equation*}
 A = \left[ \begin{array}{cc} \mathcal{L}_{g_1} \mathcal{L}_f h_1 & \mathcal{L}_{g_2} \mathcal{L}_f h_1 \\
                               \mathcal{L}_{g_1} h_2               & \mathcal{L}_{g_2} h_2 \end{array} \right].
 \end{equation*}
\end{itemize}
For any arbitrary small $\varepsilon > 0$, there exist constants $\kappa_p,\kappa_d,\kappa_v > 0$ for the control strategy \eqref{eq:vslip_uss},~\eqref{eq:vslip_udse},~\eqref{eq:vslip_uds} such that:
\begin{equation*} 
\lim_{t \to \infty} q_2^\ast(q_1(t)) - q_2(t) = 0 \;\;\;\; \mbox{and} \;\;\;\; \lim_{t \to \infty} \vert \dot{q}_1^\ast(q_1(t)) - \dot{q}_1(t) \vert < \varepsilon.
\end{equation*}
\end{proposition}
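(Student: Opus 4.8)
The three feedback laws in \eqref{eq:vslip_uss}, \eqref{eq:vslip_udse} and \eqref{eq:vslip_uds} are all (partial) input--output linearizing controllers, so the plan is to (i) establish the relative degrees of $h_1$ and $h_2$ and the regularity of the associated decoupling maps, (ii) read off the closed-loop error dynamics imposed in each phase, and (iii) handle the underactuated phases by a zero-dynamics estimate, chaining the phases over one stride with a Poincar\'e-type contraction bound to recover the $\varepsilon$-estimate.

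For the differential structure, note that $h_1 = q_2^\ast(q_1) - q_2$ depends on $q$ only, while each input vector field has the form $g_i = (0,\,Be_i)$, so $\mathcal{L}_{g_i} h_1 \equiv 0$ and hence $\dot h_1 = \mathcal{L}_f h_1$, $\ddot h_1 = \mathcal{L}_f^2 h_1 + \sum_i (\mathcal{L}_{g_i}\mathcal{L}_f h_1)\,u_i$: the output $h_1$ has relative degree two, with decoupling covector $A = [\,\mathcal{L}_{g_1}\mathcal{L}_f h_1 \;\; \mathcal{L}_{g_2}\mathcal{L}_f h_1\,]$, whose $i$-th entry is built from $\partial\phi_i/\partial q$ and therefore vanishes only when leg $i$ is at its rest length $L_0$. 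Similarly $h_2 = \dot q_1^\ast(q_1) - \dot q_1$ depends on $(q_1,p_1)$, giving $\dot h_2 = \mathcal{L}_f h_2 + \sum_i (\mathcal{L}_{g_i} h_2)\,u_i$, relative degree one, so in deep double support the matrix $A$ of \eqref{eq:vslip_uds} is a genuine $2\times 2$ decoupling matrix; I would record the standing assumptions that $A$ is nonsingular on the operating region (both legs sufficiently compressed, $\dot q_1 > 0$) and that, near the reference gait, the feedback respects the stiffness limits \eqref{eq:u_limit}. Substituting the controllers then gives, in deep double support, $\ddot h_1 + \kappa_d \dot h_1 + \kappa_p h_1 = 0$ together with $\dot h_2 + \kappa_v h_2 = 0$, and in the single-support phase \eqref{eq:vslip_uss} as well as in the touchdown/lift-off windows \eqref{eq:vslip_udse} (where $A A^\sharp = 1$, since $A$ is a nonzero row covector) the single equation $\ddot h_1 + \kappa_d \dot h_1 + \kappa_p h_1 = 0$; with $\kappa_p,\kappa_d,\kappa_v>0$ these are Hurwitz, so the regulated errors contract exponentially at rates set by the gains.

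It remains to control the intervals where only $h_1$ is regulated and $h_2$ follows the internal dynamics. On the zero set $\{h_1 = 0,\ \dot h_1 = 0\}$ the pair $(q_2,\dot q_2)$ is slaved to $(q_1,\dot q_1)$, so the zero dynamics is a scalar second-order system in $q_1$ obtained by restricting \eqref{eq:vslip_dynamics} to that surface; a short computation shows that the reference $(q^\circ(t),\dot q^\circ(t))$ --- which solves the passive SLIP equations and, by the very definition of the parameterization, satisfies $q_2^\circ(t) \equiv q_2^\ast(q_1^\circ(t))$ --- makes $\mathcal{L}_f^2 h_1 = 0$, hence forces the linearizing input to vanish along it, so the reference is (up to the finite-Fourier approximation of $q_2^\ast$) an integral curve of the zero dynamics. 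Continuous dependence on initial data over the bounded single-support duration then bounds the excursion of $(q_1,\dot q_1)$, and thus of $h_2$, by a constant times the transverse error at phase entry plus a term that tends to $0$ as $\kappa_p,\kappa_d\to\infty$; taking $L_e$ small keeps the windows \eqref{eq:vslip_udse} short and makes the analogous drift there negligible. Finally, a stride Poincar\'e map taken at a recurring event (for instance the single-support apex) chains these estimates: the transverse error $(h_1,\dot h_1)$ --- continuous across the touchdown and lift-off resets --- is multiplied each stride by a factor tending to $0$ with the gains, which yields $q_2^\ast(q_1(t)) - q_2(t)\to 0$; the $h_2$-coordinate, contracted in deep double support but perturbed during single support and in the $L_e$-windows, is driven into a ball whose radius can be made smaller than the prescribed $\varepsilon$ by taking $\kappa_v$ large and $L_e$ small, consistent with \eqref{eq:u_limit}.

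I expect the zero-dynamics and stride-map step to be the real obstacle: identifying the reduced dynamics, checking that the reference is (approximately) invariant for it, and bounding the coupling between the exponentially contracted $h_1$-subsystem and the only intermittently actuated $h_2$-subsystem tightly enough that the composite stride map is a contraction onto the $\varepsilon$-neighborhood. The relative-degree and decoupling-matrix regularity conditions, and feasibility of the inputs under \eqref{eq:u_limit}, are routine but should be stated explicitly, while the non-smoothness of the gait ($q\in C^2$ only, as noted in Section~\ref{subsec:slip_dynamics}) enters solely through the need to replace $q_2^\ast(q_1)$ by its smooth Fourier surrogate.
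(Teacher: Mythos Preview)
Your proposal is correct in outline and reaches the right closed-loop error dynamics, but it takes a noticeably different route from the paper for the underactuated part. Both you and the paper begin identically: verify the relative degrees (two for $h_1$, one for $h_2$), check that the decoupling terms are nonzero on the operating region ($0<L_i<L_0$, $c_1\neq c_2$), and read off $\ddot h_1 + \kappa_d\dot h_1 + \kappa_p h_1 = 0$ in all phases together with $\dot h_2 + \kappa_v h_2 = 0$ in deep double support. The paper then observes that $h_1$ depends only on $q$, which is $C^1$ across the phase switches, so the linear $h_1$-dynamics glue together to give $h_1\to 0$ directly, without any stride-map argument.

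Where you diverge is in handling $h_2$ during single support. You propose a zero-dynamics analysis: identify the reference as an (approximate) integral curve of the internal dynamics on $\{h_1=\dot h_1=0\}$, bound the drift of $h_2$ by continuous dependence over the finite swing time, and then close the loop with a Poincar\'e contraction over one stride. The paper instead uses a one-line energy estimate: since $u_1$ is confined to the finite stiffness range \eqref{eq:u_limit}, the kinetic-energy change over a single-support phase is bounded by some $C_1$, hence $|h_2|$ at the next touchdown is bounded by a constant $C_2$, and then any sufficiently large $\kappa_v$ shrinks $h_2$ below $\varepsilon$ within the double-support window. Your approach is more systematic and would generalize better (e.g.\ to situations where the input bound alone does not give a useful energy estimate), but it invokes more machinery than the paper needs; the paper's energy argument is shorter and exploits the specific structure of the V-SLIP model, at the cost of being somewhat informal about uniformity of the bound across strides.
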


\begin{remark}
The control input \eqref{eq:vslip_udse} is introduced, because the system \eqref{eq:vslip_dynamics} is no longer controllable when one of the legs reaches its rest length $L_0$. As such, the transition domain defined by $L_e$ is necessary to comply with \eqref{eq:u_limit}.
\end{remark}

\begin{remark}
As observed in Section~\ref{subsec:slip_dynamics}, the state trajectories $q(t)$ are of class $C^2$ only, and therefore it is not possible to make the leg stiffness a state of the system, because higher order Lie-derivatives do not exist.
\end{remark}

\begin{remark}
The control inputs, as given by Proposition~\ref{prop:vslip_control}, renders solutions of \eqref{eq:vslip_dynamics} asymptotically converging to $(q_2^\ast,\dot{q}_1^\ast)$, which are parametrized  by the horizontal position $q_1$. Since $q_1$ is strictly monotonically increasing, the control inputs asymptotically stabilize the system \eqref{eq:vslip_dynamics}.
\end{remark}

\begin{proof}
It is straightforward to show that, during the single support phase, $\mathcal{L}_{g_1} \mathcal{L}_f h_1$ in \eqref{eq:vslip_uss} is bounded away from zero if $0 < L_1 < L_0$. Similarly, during the double support phase, the matrix $A$ in \eqref{eq:vslip_uds} is invertible if $0 < L_i < L_0, i=1,2$ and in addition $c_1 \neq c_2$. These conditions are met through the definition of the phase transitions \eqref{eq:vslip_td} and \eqref{eq:vslip_lo}.

During the double support phase, \eqref{eq:vslip_uds} renders the system strongly input-output decoupled, i.e. $h_i$ is invariant under $u_j$ for $i \neq j$ \cite{Nijmeijer1990}. Therefore, and by \eqref{eq:vslip_uss},~\eqref{eq:vslip_udse}, during both the single and double support phases the error dynamics $h_1(t)$ are described by
\begin{equation*}
\ddot{h}_1 + \kappa_d \dot{h}_1 + \kappa_p = 0.
\end{equation*}
If $\kappa_p,\kappa_d$ are chosen such that the zeros of the characteristic polynomial are in the open left half-plane, then the dynamics of $h_1$ are asymptotically stable during each phase. The error function $h_1$ depends on the configuration $q$ only, and $q(t)$ is continuous and differentiable across the phase transitions. Therefore,
\begin{equation*} 
\lim_{t \to \infty} q_2^\ast(q_1(t)) - q_2(t) = 0
\end{equation*}
will be achieved.

The dynamics of the error function $h_2$ are, during the double support phase, described by
\begin{equation*}
\dot{h}_2 + \kappa_v h_2 = 0,
\end{equation*}
which has as analytic solution
\begin{equation*}
h_2(t) = e^{-\kappa_v (t - t_\mathrm{ds})} h_2(t_\mathrm{ds}), \quad t \geq t_\mathrm{ds},
\end{equation*}
where $t_\mathrm{ds}$ is the time instant of the last touchdown event. For any $\kappa_v > 0$, $h_2(t)$ is asymptotically stable during the double support phase.

However, during the single support phase, the dynamics of $h_2(t)$ are uncontrolled. During this phase, the control action of $u_1$ will result in a change of kinetic energy with respect to the constant energy level of the SLIP reference gait. Since $u_1$ is bounded, as defined in \eqref{eq:u_limit}, the total increase in kinetic energy is also bounded. Let $\Delta E_\mathrm{ss}$ denote the increase of energy during the single support phase due to $u_1$. There exists a constant $C_1$ such that
\begin{equation*}
| \Delta E | < C_1,
\end{equation*}
which implies, since $h_2(t)$ is a function of the momentum variable $p_2$, that
\begin{equation*}
| h_2 (t_\mathrm{ds}) | < C_2 < C_1.
\end{equation*}
This in turn implies that there exists $\kappa_v < \infty$ that brings $h_2(t)$ in a neighborhood $\varepsilon$ of zero within the duration of the double support phase.
\end{proof}

With the parameters as listed in Table~\ref{tab:vslip_parameters}, a numeric simulation has been carried out using the PyDSTool software package~\cite{Clewley2007}. The reference $(q_2^\ast,\dot{q}_1^\ast)$ has been obtained by searching for a limit cycle of the uncontrolled SLIP model with the same parameters. As shown in Figure~\ref{fig:vslip_errors}, the controller indeed achieves the converges as claimed in Proposition~\ref{prop:vslip_control}. The error $h_2$ is never exactly zero, because the solutions to \eqref{eq:slip_dynamics} are not analytical. Therefore the parameterized reference is not an exact representation of the natural dynamics, yielding the mismatch in $h_2$ during the single support phase.

\begin{table}
\centering
\caption{Controlled V-SLIP model parameter values.}
\label{tab:vslip_parameters}
\begin{tabular}{lccl}
Parameter       & Value    & Unit       & Description \\
\hline
\hline
$m_h$           & $15.0$   & kg         & Hip mass \\
$L_0$           & $1.0$    & m          & Spring rest length \\
$L_e$           & $0.01$   & m          & Phase transition margin \\
$\alpha_0$      & $62.5$   & ${}^\circ$ & Angle of attack \\
$k_0$           & $2000$   & N/m        & Nominal leg stiffness \\
$k_\mathrm{min}$  & $0$      & N/m        & Lower bound on leg stiffness \\
$k_\mathrm{max}$  & $10000$  & N/m        & Upper bound on leg stiffness \\
\hline
$\kappa_p$      & $350$    &            & Control parameter \\
$\kappa_d$      & $40$     &            & Control parameter \\
$\kappa_v$      & $15$     &            & Control parameter \\
\hline
\end{tabular}
\end{table}

\begin{figure}
\centering
\includegraphics{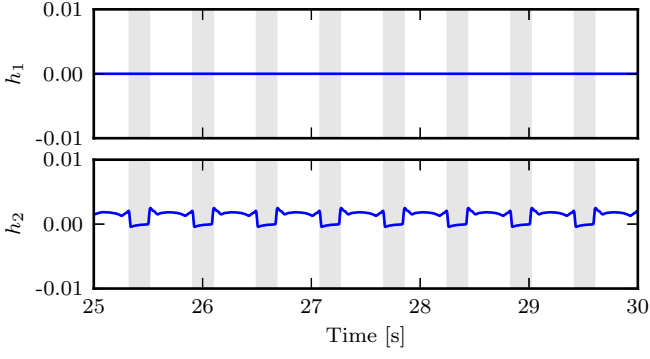}
\caption{Steady-state error functions for the controlled V-SLIP model---It can be seen that for the control parameters listed in Table~\ref{tab:vslip_parameters} the error functions converge like claimed in Proposition~\ref{prop:vslip_control}.}
\label{fig:vslip_errors}
\end{figure}

%%%%%%%%%%%%%%%%%%%%%%%%%%%%%%%%%%%%%%%%%%%%%%%%%%%%%%%%%%%%%%%%%%%%%%%%%%%%%%%%
\section{The Controlled V-SLIP Model with Swing Leg Dynamics}
\label{sec:swing}

While the bipedal (V-)SLIP models do accurately reproduce hip trajectories observed in human walking, and thus can give insights in human walking performance, the models are conceptual. In particular, all mass is assumed to be concentrated in a single point mass at the hip, and the legs are assumed to be massless springs---assumptions that cannot be considered valid for a robotic system.

In this Section we extend the V-SLIP model to incorporate swing leg dynamics. This is done by adding a foot mass, as shown in Figure~\ref{fig:swing_model} \cite{Visser2013a}. During the swing phase, the leg is assumed to have a fixed length $L_0$, while during the stance phase it is again assumed to be a massless spring connecting the foot and the hip masses. In this Section we derive the dynamic equations that govern the system behavior, and extend the controller from Section~\ref{subsec:vslip_controller} to handle the swing leg dynamics.

\begin{figure}
\centering
\begin{pspicture}(7,5.4)
 %\psgrid
 
 \rput[bl]{0}(0.2,0.2){%
  \includegraphics[width=2.625in]{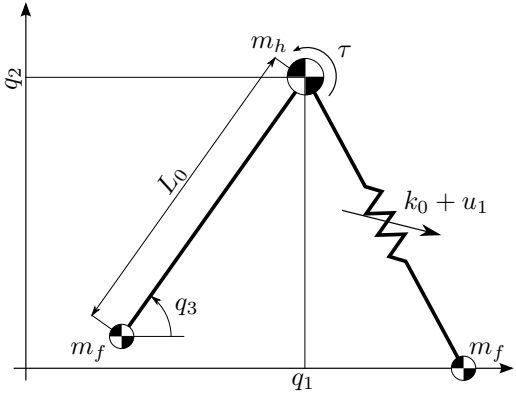}
 }
  
 \rput{0}(4.05,0.2){$q_1$}
 \rput{90}(0.2,4.25){$q_2$}
 \rput{0}(2.5,1.2){$q_3$}
 \rput{0}(3.6,4.7){$m_h$}
 \rput{0}(1.2,0.6){$m_f$}
 \rput{0}(6.5,0.6){$m_f$}
 \rput{0}(5.95,2.6){$k_0 + u_1$}
 \rput{56}(2.3,2.9){$L_0$}
 \rput{0}(4.6,4.6){$\tau$}
\end{pspicture}
\caption{The V-SLIP model with feet---By adding feet masses $m_f$ to the V-SLIP model, swing leg dynamics are introduced. The swing leg is assumed to be constraint at a length $L_0$ during swing, and the stance foot is assumed to be fixed to the ground, i.e. no slip or bouncing in the contact point.}
\label{fig:swing_model}
\end{figure}

\subsection{System Dynamics}
\label{subsec:swing_dynamics}

In deriving the dynamics of the V-SLIP model with feet, we assume that:
\begin{itemize}
\item no slip or bouncing occurs in the foot contact points;
\item the springs are unilateral, meaning that we only allow them to be compressed;
\item during the single support phase, the swing leg is constraint to have length $L_0$.
\end{itemize}
Under these assumptions, during the double support phase, we can use the double support phase model used in Section~\ref{subsec:vslip_dynamics}, and the model behavior is described accordingly by \eqref{eq:vslip_dynamics}.

During the single support phase, the model can be simplified as shown in Figure~\ref{fig:swing_single}. The configuration of the system can be described by $(q_1,q_2,q_3)$, where $q_3 \in [0,\pi)$ is the orientation of the swing leg. The total mass of the swing leg is $m = m_h + m_f$. Since the swing leg is assumed to be a rigid link of length $L_0$, its center of mass is at a distance
\begin{equation*}
d_\mathrm{com} = \frac{m_f L_0}{m_h + m_f}
\end{equation*}
from the hip joint (as indicated in Figure~\ref{fig:swing_single}). The moment of inertia of the leg around its center of mass is given by
\begin{equation*}
J_\mathrm{com} = m_h d_\mathrm{com}^2 + m_f (L_0 - d_\mathrm{com})^2.
\end{equation*}

\begin{figure}
\centering
\begin{pspicture}(7,5.4)
 %\psgrid
 
 \rput[bl]{0}(0.2,0.2){%
  \includegraphics[width=2.625in]{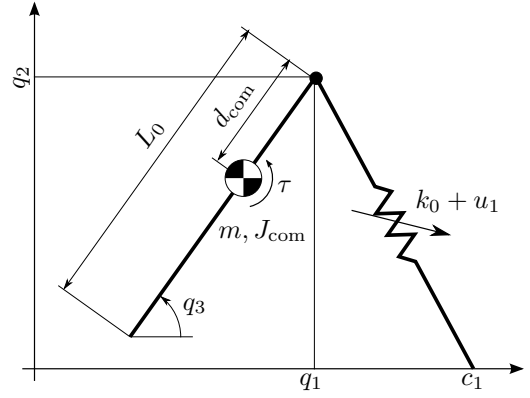}
 }
  
 \rput{0}(4.05,0.2){$q_1$}
 \rput{90}(0.2,4.25){$q_2$}
 \rput{0}(2.5,1.2){$q_3$}
 \rput{0}(3.4,2.2){$m,J_\mathrm{com}$}
 \rput{0}(6.0,2.6){$k_0 + u_1$}
 \rput{56}(1.9,3.5){$L_0$}
 \rput{56}(3.0,3.9){$d_\mathrm{com}$}
 \rput{0}(3.7,2.8){$\tau$}
 \rput{0}(6.2,0.2){$c_1$}
\end{pspicture}
\caption{Model simplification---Under the assumptions of a rigid swing leg and no slip or bouncing in the foot contact point, the model depicted in Figure~\ref{fig:swing_model} can be simplified during the single support phase. During the double support phase, the model is reduced to the V-SLIP model, as shown in Figure~\ref{fig:vslip_model}.}
\label{fig:swing_single}
\end{figure}

In order to derive the dynamic equations of the system for the single support phase, we let $(v_1,v_2,v_3) =: v$ denote the horizontal, vertical and rotational velocity of the (center of mass of the) swing leg. These velocities are related to the rate of change of the configuration variables $\dot{q}$ by the Jacobian matrix $S(q)$, defined as:
\begin{equation}\label{eq:swing_s}
S(q) = \left[ \begin{array}{ccc} 1 & 0 & d_\mathrm{com} \sin (q_3) \\
                                 0 & 1 & -d_\mathrm{com} \cos (q_3) \\
                                 0 & 0 & 1 \end{array} \right],
\end{equation}
such that $v = S(q) \dot{q}$. This allows to have the configuration variables $q$ coincide with those used in the V-SLIP model of Section~\ref{sec:vslip}. In particular, by defining $p := Mv$, with
\begin{equation}\label{eq:swing_mm}
M = \mathrm{diag} (m_h + m_f, m_h + m_f, J_\mathrm{com})
\end{equation}
the mass matrix of the rigid body representing swing leg, the dynamics during the single support phase can be derived in terms of $(q,p)$ as follows.

The kinetic energy is given by $K = \frac{1}{2}p^T M^{-1} p$, and we derive the potential energy function $V$ as
\begin{equation*}
V = (m_h + m_f) g_0 (q_2 - d_\mathrm{com} \sin (q_3)) + \frac{1}{2} k_0 (L_0 - L_1)^2.
\end{equation*}
Then, the Hamiltonian energy function is given by $H = K + V$, and we derive the dynamic equations in port-Hamiltonian form by using the Boltzmann-Hamel equations \cite{Duindam2008}, yielding:
\begin{eqnarray}
\frac{\text{d}}{\text{d} t} \left[ \begin{array}{c} q \\ p \end{array} \right] =
J \left[ \begin{array}{c} \frac{\partial H}{\partial q} \smallskip \\
                          \frac{\partial H}{\partial p} \end{array} \right] +
\left[ \begin{array}{c} 0 \\ B \end{array} \right] u \nonumber \\
y = \left[ \begin{array}{cc} 0 & B^T \end{array} \right]
\left[ \begin{array}{c} \frac{\partial H}{\partial q} \smallskip \\
                        \frac{\partial H}{\partial p} \end{array} \right],
\label{eq:swing_dynamics}
\end{eqnarray}
where the skew-symmetric matrix $J$ is given by
\begin{equation*}
J = \left[ \begin{array}{cc}       0 & S^{-1} \\
                             -S^{-T} & S^{-T} \left(
 \frac{\partial^T (S^T p)}{\partial q} - \frac{\partial (S^T p)}{\partial q}
                                              \right) S^{-1} \end{array} \right].
\end{equation*}
Again, the output $y$ is dual to the input $u$, so that $\langle u | y \rangle$ defines a power flow. The control input $u = (u_1,\tau)$, i.e. the controllable part of the stance leg stiffness, and the torque applied to the swing leg. The input matrix $B$ is given by
\begin{equation*}
B = S^{-T}
 \left[ \begin{array}{cc} \frac{\partial \phi_1}{\partial q_1} & 0 \smallskip \\
                          \frac{\partial \phi_1}{\partial q_2} & 0 \smallskip \\
                          0 & 1 \end{array} \right],
\end{equation*}
with $\phi_1 = -\frac{1}{2} (L_0 - L_1)^2$.
The mapping by $S^{-T}$ is necessary because the inputs are not collocated with $v$, but with $\dot{q}$, as can be seen in Figure~\ref{fig:swing_single}.

\subsection{Phase Transitions}
\label{subsec:swing_phases}

Unlike the V-SLIP model, where in both the double and single support phases the same configuration variables are used, this model uses two sets of configuration variables: in the double support phase only the position of the hip with respect to the foot contact points is relevant, while in the single support phase also the swing leg orientation is required. Therefore, phase transition mappings need to be defined as follows.

\subsubsection*{Transition Conditions}
Similar to the (V-)SLIP models, the touchdown event occurs when the foot of the swing leg has passed in front of the hip,\footnote{Essentially the swing leg is allowed to swing through the floor. This will be addressed in the next model iteration in Section~\ref{sec:knee}.} and in addition, recalling that the swing leg is constraint to have length $L_0$ during the swing phase,
\begin{equation*}
q_2 = L_0 \sin (q_3).
\end{equation*}
At the time instance that both of these conditions are met, the new foot contact point $c_2$ is computed as
\begin{equation*}
c_2 = q_1 - L_0 \cos (q_3).
\end{equation*}

The lift-off event occurs when the trailing leg reaches its rest length $L_0$ with non-zero speed, since we do not allow the springs to pull.\footnote{To be accurate, the transition occurs when the foot starts to accelerate away from the floor. However, this is practically equivalent to assuming that the transition occurs at the moment the spring length becomes equal to its rest length and assuming that the leg instantly becomes rigid at the same moment.}

\subsubsection*{Momentum Variable Mapping}
To complete the phase transitions, the momentum variables of the double support phase need to be mapped to the momentum variables for the single support phase and vice versa. This mapping also needs to ensure that the constraints on the foot contact points are maintained. In particular, upon touchdown, the foot of the former swing leg needs to be instantaneously constraint to fulfill the no-slip condition. This can be realized by applying a momentum reset at the instant of touchdown \cite{Duindam2009a}. It was shown in our previous work that, despite the energy loss associated with the impact, energy-efficient locomotion can be realized \cite{Visser2013a}. However, in this work, we will focus on the added benefit of the compliant legs, and thus assume a compliant impact. This implies that, upon impact, the foot mass $m_f$ will instantaneously dissipate its kinetic energy, while the hip mass $m_h$ remains unaffected by the impact due to the compliant leg.

To map the momentum variables between the phases, we need to account for the disappearing and reappearing of the foot mass. For this purpose, we define new coordinates
\begin{equation*}
z_1 = (q_1,q_2,q_3) \quad \mathrm{and} \quad z_2 = (q_1,q_2,c_i),
\end{equation*}
where $c_i$ denotes the contact point that is subject to change due to the transition. During both the touchdown and the lift-off event, the leg length is assumed to be $L_0$, so that we obtain
\begin{equation*}
z_2(z_1) = \left[ \begin{array}{c} q_1 \\ q_2 \\ q_1 - L_0 \cos (q_3) \end{array} \right].
\end{equation*}
We define the Jacobian matrix $Z := \partial z_2 / \partial z_1$ accordingly.

For the transition from single support to double support, using the subscripts ``old'' and ``new'' for post- and pre-transition values, we have:
\begin{equation*}
\dot{z}_{2,\mathrm{new}} = Z \dot{z}_{1,\mathrm{old}},
\end{equation*}
where $\dot{z}_{1,\mathrm{old}}$ is defined by the momentum variables $p_\mathrm{old}$ just before the phase transition:
\begin{equation*}
\dot{z}_{1,\mathrm{old}} = S^{-1}(q) M_\mathrm{ss}^{-1} p_\mathrm{old},
\end{equation*}
with $S(q)$ defined in \eqref{eq:swing_s} and $M_\mathrm{ss}$ the mass matrix defined in \eqref{eq:swing_mm}. Note that the expression for $\dot{c}_2$ is irrelevant in this phase transition, since we assume that the foot is instantly constraint. The post-transition momentum variables for the double support phase $p_\mathrm{new}$ are then given by
\begin{equation*}
p_\mathrm{new} = M_\mathrm{ds} \underbrace{
 \left[ \begin{array}{c} \dot{q}_1 \\ \dot{q}_2 \end{array} \right]
}_{\in \dot{z}_{2,\mathrm{new}}},
\end{equation*}
with $M_\mathrm{ds}$ the mass matrix defined in \eqref{eq:vslip_mm}.

Similarly, for the transition from double support to single support, we have
\begin{equation*}
\dot{z}_{1,\mathrm{new}} = Z^{-1} \dot{z}_{2,\mathrm{old}},
\end{equation*}
where $\dot{z}_{2,\mathrm{old}}$ is defined through the momentum variables $p_\mathrm{old}$ just before the phase transition:
\begin{equation*}
\dot{z}_{2,\mathrm{old}} = M_\mathrm{ds}^{-1} p_\mathrm{old},
\end{equation*}
with $M_\mathrm{ds}$ the mass matrix defined in \eqref{eq:vslip_mm}, and setting $\dot{c}_1 = 0$, since the foot is stationary at the moment of lift-off. The post-transition momentum variables $p_\mathrm{new}$ for the single support phase are then calculated as
\begin{equation*}
p_\mathrm{new} = M_\mathrm{ss} S(q) \dot{z}_{1,\mathrm{new}},
\end{equation*}
with $S(q)$ defined in \eqref{eq:swing_s}.

\subsection{Controller Design}
\label{subsec:swing_controller}

During the double support phase, the model is equivalent to the V-SLIP model, and therefore, during this phase, the control strategy proposed in Proposition~\ref{prop:vslip_control} can be used. For the single support phase, the control strategy has to be extended to regulate the swing leg trajectory $q_3(t)$. For this purpose, we define a reference trajectory $q_3^\ast(t)$ as a polynomial:
\begin{equation*}
q_3^\ast(t) = \sum_{i=0}^5 a_i (t - t_\mathrm{lo})^i, \quad t_\mathrm{lo} \leq t < t_\mathrm{lo} + T_\mathrm{swing},
\end{equation*}
where $T_\mathrm{swing}$ is the desired swing time, e.g. obtained from the nominal single support phase time of the SLIP model reference gait, and $t_\mathrm{lo}$ is the time instant of the last lift-off event. The coefficients $a_i$ are such that $q_3^\ast(t)$ is a minimum-jerk trajectory with boundary conditions\footnote{The velocity $\dot{q}_3(t_\mathrm{lo})$ and the acceleration $\ddot{q}_3(t_\mathrm{lo})$ are not matched by the reference trajectory $q_3^\ast(t)$, because these quantities are in practice very difficult to measure accurately.}
\begin{equation*}
\left( \begin{array}{c} q_3^\ast(t_\mathrm{lo}) \\
                         \dot{q}_3^\ast(t_\mathrm{lo}) \\
                         \ddot{q}_3^\ast(t_\mathrm{lo}) \end{array} \right) =
\left( \begin{array}{c} q_3(t_\mathrm{lo}) \\
                        0 \\
                        0 \end{array} \right) \\
\end{equation*}
and
\begin{equation*}
\left( \begin{array}{c} q_3^\ast(t_\mathrm{lo} + T_\mathrm{swing}) \\
                        \dot{q}_3^\ast(t_\mathrm{lo} + T_\mathrm{swing}) \\
                        \ddot{q}_3^\ast(t_\mathrm{lo} + T_\mathrm{swing}) \end{array} \right) =
\left( \begin{array}{c} \pi - \alpha_0 \\
                        0 \\
                        0 \end{array} \right).
\end{equation*}

In formulating the control strategy, we will define for both the double support and single support phases a state vector of the form $x = (q,p)$ and write the respective differential equations \eqref{eq:vslip_dynamics} and \eqref{eq:swing_dynamics} in the standard form \eqref{eq:standard}. The following control strategy is proposed, extending the V-SLIP control strategy formulated in Proposition~\ref{prop:vslip_control}.

\begin{proposition}\label{prop:swing_control}
Given reference state trajectories $(q_2^\ast,\dot{q}_1^\ast,q_3^\ast)$, define the error functions
\begin{eqnarray*}
h_1 &= q_2^\ast - q_2, \\
h_2 &= \dot{q}_1^\ast - \dot{q}_1, \\
h_3 &= q_3^\ast - q_3.
\end{eqnarray*}
During the double support phase, the corresponding control strategy formulated in Proposition~\ref{prop:vslip_control} renders solutions of \eqref{eq:vslip_dynamics} asymptotically converging to $(q_2^\ast,\dot{q}_1^\ast)$.

During the single support phase, the following control strategy renders solutions of \eqref{eq:swing_dynamics} asymptotically converging to $(q_2^\ast,q_3^\ast)$:
\begin{equation}\label{eq:swing_uss}
\left[ \begin{array}{c} u_1 \\ \tau \end{array} \right] = -A^{-1}
\left[ \begin{array}{c} \mathcal{L}_f^2 h_1 + \kappa_d \mathcal{L}_f h_1 + \kappa_p h_1 \\
                        \mathcal{L}_f^2 h_3 + \kappa_w \mathcal{L}_f h_3 + \kappa_a h_3 \end{array} \right],
\end{equation}
with
\begin{equation*}
A = \left[ \begin{array}{cc} \mathcal{L}_{g_1} \mathcal{L}_f h_1 & \mathcal{L}_{g_2} \mathcal{L}_f h_1 \\
                            \mathcal{L}_{g_1} \mathcal{L}_f h_3 & \mathcal{L}_{g_2} \mathcal{L}_f h_3 \end{array} \right].
\end{equation*}
For any arbitrary small $\varepsilon_1, \varepsilon_2, \delta > 0$, there exist constants $\kappa_p,\kappa_d,\kappa_v,\kappa_a,\kappa_w > 0$ for the control strategy \eqref{eq:vslip_udse},~\eqref{eq:vslip_uds},~\eqref{eq:swing_uss} such that \begin{equation*} 
\lim_{t \to \infty} \vert q_2^\ast(t) - q_2(t) \vert < \varepsilon_1, \;\;\;\; \mbox{and} \;\;\;\; \lim_{t \to \infty} \vert \dot{q}_1^\ast(t) - \dot{q}_1(t) \vert < \varepsilon_2,
\end{equation*}
and, during $t_\mathrm{lo} \leq t < t_\mathrm{lo} + T_\mathrm{swing}$, 
\begin{equation*}
|q_3^\ast(t_\mathrm{lo} + T_\mathrm{swing}) - q_3(t_\mathrm{lo} + T_\mathrm{swing})| < \delta.
\end{equation*}
\end{proposition}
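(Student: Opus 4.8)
The plan is to reproduce the two--timescale structure of the proof of Proposition~\ref{prop:vslip_control}, handling the double support and single support phases separately and then checking that the error variables behave well across the phase transitions. In the double support phase the model is exactly \eqref{eq:vslip_dynamics}, so Proposition~\ref{prop:vslip_control} applies verbatim: with the controls \eqref{eq:vslip_udse}, \eqref{eq:vslip_uds} the decoupling matrix $A$ is nonsingular whenever $0 < L_i < L_0$ and $c_1 \neq c_2$ (which the transition conditions guarantee), the system is input--output decoupled, and $h_1$ obeys $\ddot h_1 + \kappa_d \dot h_1 + \kappa_p h_1 = 0$ while $h_2$ obeys $\dot h_2 + \kappa_v h_2 = 0$, both Hurwitz for positive gains.

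For the single support phase the first step is to check that the $2\times 2$ matrix $A$ in \eqref{eq:swing_uss} is nonsingular on the relevant domain: the row associated with $h_1$ is as in the V-SLIP case and is nonzero for $0 < L_1 < L_0$; the row associated with $h_3$ is nonzero because the torque $\tau$ acts at second order on the swing-leg angle ($\mathcal{L}_{g_2}\mathcal{L}_f h_3 \neq 0$), and the two rows are independent because the stance stiffness $u_1$ has no direct second-order effect on $q_3$. Hence \eqref{eq:swing_uss} is well defined and, by input--output linearization, yields the decoupled error dynamics $\ddot h_1 + \kappa_d \dot h_1 + \kappa_p h_1 = 0$ and $\ddot h_3 + \kappa_w \dot h_3 + \kappa_a h_3 = 0$. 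For $h_3$ the boundary conditions imposed on $q_3^\ast$ give $h_3(t_\mathrm{lo}) = 0$ and $\dot h_3(t_\mathrm{lo}) = -\dot q_3(t_\mathrm{lo})$, which is bounded on the region of interest; choosing $\kappa_w,\kappa_a$ so that the decay rate exceeds $\log(\|\dot h_3(t_\mathrm{lo})\|/\delta)/T_\mathrm{swing}$ forces $|h_3(t_\mathrm{lo}+T_\mathrm{swing})| < \delta$. During this phase $h_2$ is uncontrolled, but since $u_1$ is bounded by \eqref{eq:u_limit} and $\tau$ is bounded by the actuator limit, the kinetic-energy injection, and hence the growth of $|h_2|$, over the finite single support phase is bounded by a constant, exactly as in the proof of Proposition~\ref{prop:vslip_control}.

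The compliant-impact assumption is what glues the phases: at touchdown and lift-off the hip position $(q_1,q_2)$ and hip velocity $(\dot q_1,\dot q_2)$ are continuous (only the foot mass $m_f$ dissipates energy, and it is mechanically decoupled from $m_h$ by the compliant leg at the instant of impact). Consequently $h_1$, $\dot h_1$ and $h_2$ are continuous across the transitions, so $h_1$ is governed by globally Hurwitz linear dynamics up to the mismatch between the time-parameterized reference and the geometry-determined single support duration, which is precisely what turns the exact convergence of Proposition~\ref{prop:vslip_control} into convergence to an $\varepsilon_1$-neighborhood, and $h_2$ is pulled back toward zero during each double support phase with rate $\kappa_v$; choosing $\kappa_v$ large enough relative to the double support duration gives $\limsup_{t\to\infty}|h_2(t)| < \varepsilon_2$. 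Assembling the three estimates yields the claimed limits.

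The main obstacle will be making the per-cycle (Poincar\'e-map) argument rigorous rather than merely gluing linear ODEs: one must exhibit an invariant neighborhood of the reference gait on which (i) both decoupling matrices stay nonsingular and the input bound \eqref{eq:u_limit} (and the torque bound) are respected, (ii) the swing-leg angular velocity at lift-off, hence $\dot h_3(t_\mathrm{lo})$, stays uniformly bounded, and (iii) the double support phase lasts long enough that the bounded drift of $h_2$ accumulated during single support is more than compensated. The interaction between the fixed swing duration $T_\mathrm{swing}$, the touchdown instant fixed by the constraint $q_2 = L_0\sin q_3$, and the impact map is the source of the residual errors $\varepsilon_1,\varepsilon_2,\delta$, and controlling it is the part that cannot be dismissed as routine.
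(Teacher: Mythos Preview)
Your approach is essentially the paper's: input--output linearization in each phase yields decoupled Hurwitz error dynamics for $h_1$ and $h_3$ (and for $h_2$ in double support), $h_2$ is uncontrolled but boundedly perturbed in single support, and the phase transitions are what degrade exact convergence to $\varepsilon$-convergence. The one substantive difference is how you model the transitions: the paper writes the closed-loop error equations as $\ddot h_1 + \kappa_d \dot h_1 + \kappa_p h_1 = e_1$ and $\dot h_2 + \kappa_v h_2 = e_2$ with $e_1,e_2$ \emph{bounded impulsive} disturbances arising at the switches (and in fact asserts that $\dot q_1$ is \emph{not} continuous across them), whereas you argue from the compliant-impact assumption that the hip velocity, and hence $h_1,\dot h_1,h_2$, are continuous, locating the residual error instead in the mismatch between the time-parameterized reference and the geometry-determined phase durations. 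Your continuity argument is consistent with the momentum maps in Section~\ref{subsec:swing_phases} (the first two rows of $Z$ and $Z^{-1}$ are the identity, so $\dot q_1,\dot q_2$ are preserved), so your version is at least as careful as the paper's; the paper simply absorbs all transition effects into the abstract impulsive terms $e_1,e_2$ and does not attempt the per-cycle invariance analysis you correctly flag as the nontrivial part.
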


\begin{proof}
The control strategy is such that the system is strongly input-output decoupled. Therefore, the dynamics of $h_1(t)$ are given by
\begin{equation*}
\ddot{h}_1 + \kappa_d \dot{h}_1 + \kappa_p h_1 = e_1,
\end{equation*}
where $e_1(t)$ is a disturbance due to the phase transitions. As a result, $q_2(t)$ is continuous, but not differentiable. However, $e_1(t)$ is bounded and impulsive, and therefore there exists constants $\kappa_p,\kappa_d > 0$ such that $h_2(t)$ converges to a neighborhood $\varepsilon_1$ of zero.

Similarly, the dynamics dynamics of $h_2(t)$ are given by
\begin{equation*}
\dot{h}_2 + \kappa_v h_2 = e_2,
\end{equation*}
where $e_2(t)$ is also a disturbance due to the phase transitions. As a result of these disturbances, $\dot{q}_1(t)$ is not continuous. However, since $e_2(t)$ is bounded and impulsive, there exist a $\kappa_v > 0$ such that $h_2(t)$ converges to a neighborhood $\varepsilon_2$ of zero.

During the single support phase, the dynamics $h_3(t)$ are given by
\begin{equation*}
\ddot{h}_3 + \kappa_w \dot{h}_3 + \kappa_a h_3 = 0.
\end{equation*}
For suitably chosen constant $\kappa_a,\kappa_w > 0$, such that the zero are in the open left half-plane, the error function $h_3(t)$ converges to a neighborhood $\delta$ of zero in finite time.
\end{proof}

\begin{figure}
\centering
\includegraphics{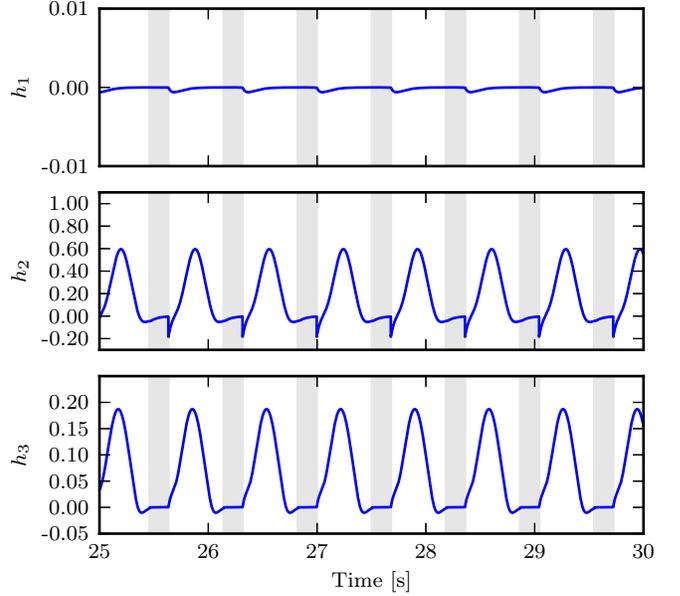}
\caption{Steady-state error functions for the controlled V-SLIP model with swing leg---It can be seen that the error functions converge like claimed in Proposition~\ref{prop:swing_control}. Note that $h_3 \equiv 0$ during the double support phase.}
\label{fig:swing_errors}
\end{figure}

The proposed control strategy has been validated through numeric simulations. The same parameters were used as for the V-SLIP model as listed in Table~\ref{tab:vslip_parameters}, and $m_f = 2.5$~kg. Furthermore, $\kappa_a=1000$ and $\kappa_w=40$. It can be observed that the error functions converge as claimed in Proposition~\ref{prop:swing_control}. In particular, the influence of the swing leg can be clearly observed when the plots are compared to Figure~\ref{fig:vslip_errors}. Specifically, we can see the influence of the swing leg motion in the error function $h_1$ at the onset of the single support phases (the unshaded areas of the plot). The error function $h_2$ also shows a significant increase in amplitude during the swing. We can also observe in $h_2$ the lift-off of the swing leg in the form of discontinuities at the moment of transition from the double support phase (shaded areas) to the single support phase (unshaded areas). The error function $h_3$ shows that the swing leg motion is controlled as claimed by the proposed control law. Note that the degree of freedom $q_3$ is not defined during the double support phase, and therefore $h_3 \equiv 0$ during this phase.

%%%%%%%%%%%%%%%%%%%%%%%%%%%%%%%%%%%%%%%%%%%%%%%%%%%%%%%%%%%%%%%%%%%%%%%%%%%%%%%%
\section{The Controlled V-SLIP model with Retracting Swing Leg Dynamics}
\label{sec:knee}

In this Section, we further refine the model presented in Section~\ref{sec:swing} by adding a knee, as shown in Figure~\ref{fig:knee_model}. This allows the swing leg to be retracted, so that it can be swung forward without scuffing the ground. We derive in this Section the dynamic equations for this model, and further extend the controller.

\begin{figure}
\centering
\begin{pspicture}(7,5.4)
 %\psgrid
 
 \rput[bl]{0}(0.2,0.2){%
  \includegraphics[width=2.625in]{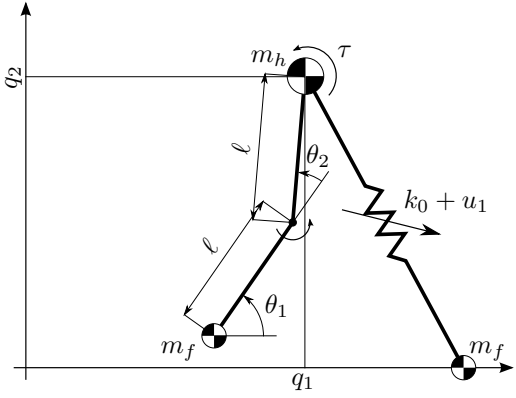}
 }
  
 \rput{0}(4.05,0.2){$q_1$}
 \rput{90}(0.2,4.25){$q_2$}
 \rput{0}(3.7,1.2){$\theta_1$}
 \rput{0}(4.25,3.2){$\theta_2$}
 \rput{0}(3.6,4.5){$m_h$}
 \rput{0}(2.4,0.6){$m_f$}
 \rput{0}(6.5,0.6){$m_f$}
 \rput{0}(5.95,2.6){$k_0 + u_1$}
 \rput{0}(4.6,4.6){$\tau$}
 %\rput{0}(4.3,2.0){$\tau_2$}
 \rput{56}(2.8,2.0){$\ell$}
 \rput{85}(3.22,3.3){$\ell$}
\end{pspicture}
\caption{The V-SLIP model with feet and knees---By adding an actuated knee joint to the model of Section~\ref{sec:swing}, the leg can be retracted during the single support phase. This allows the leg to swing forward without scuffing the ground. It is assumed that no slip or bouncing occurs in the foot contact point of the stance leg.}
\label{fig:knee_model}
\end{figure}

\subsection{System Dynamics}
\label{subsec:knee_dynamics}

Similar to the model presented in Section~\ref{sec:swing}, we will assume that:
\begin{itemize}
\item no slip or bouncing occurs in the foot contact points;
\item the spring are unilateral.
\end{itemize}
These assumptions allow to again use the double support phase model used in Section~\ref{subsec:vslip_dynamics}.

To avoid notational clutter due to goniometric relations, the simplified model depicted in Figure~\ref{fig:knee_single} is used. The simplification is possible, because the introduction of the knee joint introduces only a kinematic relation between the hip mass and the foot mass, since these are located at the extremities of the swing leg.

In deriving the dynamic equations for the single support phase of this model, we define new coordinates as
\begin{equation}\label{eq:knee_z}
z_1 = (q_1,q_2,q_3,q_4) \quad \mathrm{and} \quad z_2 = (q_1,q_2,s_1,s_2)
\end{equation}
where
\begin{eqnarray}
s_1 &= q_1 - q_4 \cos(q_3), \nonumber \\
s_2 &= q_2 - q_4 \sin(q_3),
\label{eq:knee_footpos}
\end{eqnarray}
i.e. the position of the foot of the swing leg. We furthermore define the tangent map $Z = \partial z_2 / \partial z_1$. Using this relation and noting that $z_1 = q$ and thus that $\dot{z}_2 = Z \dot{q}$, we can derive the mass matrix $M(q)$ from the energy equality
\begin{equation}\label{eq:knee_mm}
\frac{1}{2}\dot{q}^T M(q) \dot{q} =
\frac{1}{2} \dot{z}_2^T M_0 \dot{z}_2 = 
\frac{1}{2} \dot{q}^T Z^T M_0 Z \dot{q},
\end{equation}
with $M_0 = \mathrm{diag}(m_h,m_h,m_f,m_f)$.

By defining the momentum variables $p := M(q) \dot{q}$, the kinetic energy $K = \frac{1}{2} p^T M^{-1}(q) p$, and the potential energy function $V$ is found to be:
\begin{equation*}
V = m_h g_0 q_2 + m_f g_0 (q_2 - q_4 \sin (q_3)) + \frac{1}{2} k_0 (L_0 - L_1)^2.
\end{equation*}
Then, the Hamiltonian energy function $H = K + V$ and the dynamic equations in port-Hamiltonian form are given by
\begin{eqnarray}
\frac{\text{d}}{\text{d} t} \left[ \begin{array}{c} q \\ p \end{array} \right] =
\left[ \begin{array}{cc} 0 & I \\ -I & 0 \end{array} \right]
\left[ \begin{array}{cc} \frac{\partial H}{\partial q} \smallskip \\
                         \frac{\partial H}{\partial p} \end{array} \right] +
\left[ \begin{array}{c} 0 \\ B \end{array} \right] u \nonumber \\
y = \left[ \begin{array}{cc} 0 & B^T \end{array} \right]
\left[ \begin{array}{c} \frac{\partial H}{\partial q} \smallskip \\
                \frac{\partial H}{\partial p} \end{array} \right],
\label{eq:knee_dynamics}
\end{eqnarray}
where $u = (u_1,\tau_1,\tau_2)$, i.e. the controllable parts of the stance leg stiffness, and the torques collocated with $q_3$ and $q_4$. The input matrix $B$ is given by
\begin{equation*}
B = \left[ \begin{array}{ccc} \frac{\partial \phi_1}{\partial q_1} & 0 & 0 \smallskip \\
                              \frac{\partial \phi_1}{\partial q_2} & 0 & 0 \smallskip \\
                              0 & 1 & 0 \\
                              0 & 0 & 1 \end{array} \right],
\end{equation*}
with $\phi_1 = -\frac{1}{2} (L_0 - L_1)^2$.

\begin{figure}
\centering
\begin{pspicture}(7,5.4)
 %\psgrid
 
 \rput[bl]{0}(0.2,0.2){%
  \includegraphics[width=2.625in]{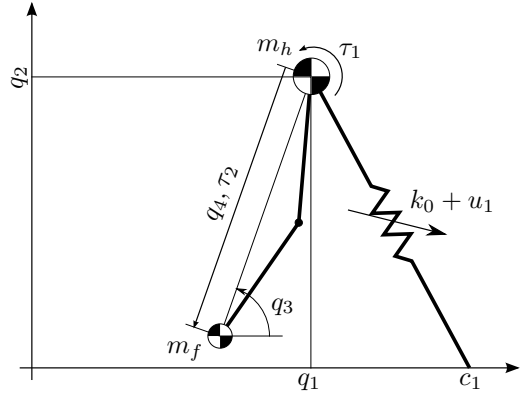}
 }
  
 \rput{0}(4.05,0.2){$q_1$}
 \rput{90}(0.2,4.25){$q_2$}
 \rput{0}(3.7,1.2){$q_3$}
 \rput{71}(2.9,2.7){$q_4,\tau_2$}
 \rput{0}(4.6,4.6){$\tau_1$}
 \rput{0}(3.6,4.65){$m_h$}
 \rput{0}(2.4,0.6){$m_f$}
 \rput{0}(5.95,2.6){$k_0 + u_1$}
 \rput{0}(6.2,0.2){$c_1$}
\end{pspicture}
\caption{Model simplification---The configuration of the swing leg can be equivalently described by a linear degree of freedom $q_4$, corresponding to the distance between the hip and the foot, and the orientation $q_3$, analogous to the model of Figure~\ref{fig:swing_model}. During the double support phase, the model is reduced to the V-SLIP model, as shown in Figure~\ref{fig:vslip_model}.}
\label{fig:knee_single}
\end{figure}

\subsection{Phase Transitions}
\label{subsec:knee_phases}

Just as in the model described in Section~\ref{sec:swing}, also in this model we need to consider the different sets of configuration variables in the single and double support phases. Therefore, in the following the phase transition mappings are defined.

\subsubsection*{Transition Conditions}
The touchdown event occurs when the swing leg foot hits the ground, i.e. when, using \eqref{eq:knee_footpos},
\begin{equation*}
q_2 = q_4 \sin (q_3).
\end{equation*}
At this time instant, the new foot contact point $c_2$ (see Figure~\ref{fig:vslip_model}) is computed as
\begin{equation*}
c_2 = q_1 - q_4 \cos (q_3).
\end{equation*}

The lift-off event is defined the same as in Section~\ref{subsec:swing_phases}, since both models are reduced to the V-SLIP model during the double support phase.

\subsubsection*{Momentum variable mapping}
Similarly to the approach taken in Section~\ref{subsec:swing_phases}, we start from the new set of coordinates defined in \eqref{eq:knee_z} and the corresponding Jacobian matrix $Z$. Thus, for the transition from single support to double support:
\begin{equation*}
\dot{z}_{2,\mathrm{new}} = Z \dot{z}_{1,\mathrm{old}},
\end{equation*}
where $\dot{z}_{1,\mathrm{old}}$ is defined by the pre-transition momentum variables $p_\mathrm{old}$ through
\begin{equation*}
\dot{z}_{1,\mathrm{old}} = M_\mathrm{ss}^{-1} p_\mathrm{old}.
\end{equation*}
Here, $M_\mathrm{ss}$ is the mass matrix defined in \eqref{eq:knee_mm}. As in Section~\ref{subsec:swing_dynamics}, the post-transition momentum variables for the double support phase $p_\mathrm{new}$ are given by
\begin{equation*}
p_\mathrm{new} = M_\mathrm{ds} \underbrace{
 \left[ \begin{array}{c} \dot{q}_1 \\ \dot{q}_2 \end{array} \right]
}_{\in \dot{z}_{2,\mathrm{new}}},
\end{equation*}
with $M_\mathrm{ds}$ the mass matrix defined in \eqref{eq:vslip_mm}.

For the transition from double support to single support, we again have
\begin{equation*}
\dot{z}_{1,\mathrm{new}} = Z^{-1} \dot{z}_{2,\mathrm{old}},
\end{equation*}
where $\dot{z}_{2,\mathrm{old}}$ is defined through the momentum variables $p_\mathrm{old}$ just before the phase transition:
\begin{equation*}
\dot{z}_{2,\mathrm{old}} = M_\mathrm{ds}^{-1} p_\mathrm{old},
\end{equation*}
with $M_\mathrm{ds}$ the mass matrix defined in \eqref{eq:vslip_mm}, and setting $\dot{s}_1 = \dot{s}_2 = 0$, since the foot is stationary at the moment of lift-off. The post-transition momentum variables $p_\mathrm{new}$ for the single support phase are then calculated as
\begin{equation*}
p_\mathrm{new} = M_\mathrm{ss} \dot{z}_{1,\mathrm{new}}.
\end{equation*}

\subsection{Controller Design}
\label{subsec:knee_controller}

During the double support phase, the control strategy proposed in Proposition~\ref{prop:vslip_control} can again be used because of the model correspondence during this phase. For the single support phase, the control strategy has to be extended with respect to the control strategy presented in Proposition~\ref{prop:swing_control}. In particular, in addition to the control of the swing leg orientation, the swing leg length has to be regulated as well. For this purpose, we define a reference trajectory $q_4^\ast(t)$ of the form
\begin{equation*}
q_4^\ast(t) = b_0 + b_1 t + b_2 t^2, \quad t_\mathrm{lo} \leq t \leq t_\mathrm{lo} + T_\mathrm{swing}.
\end{equation*}
The coefficients $b_i$ are such that the trajectory $q_4^\ast(t)$ satisfies the following conditions:
\begin{equation*}
\begin{pmatrix} q_4^\ast(t_\mathrm{lo}) \\
                q_4^\ast(t_\mathrm{lo} + \frac{1}{2} T_\mathrm{swing}) \\
                q_4^\ast(t_\mathrm{lo} + T_\mathrm{swing})
\end{pmatrix} =
\begin{pmatrix} q_4(t_\mathrm{lo}) \\
                L_0 - \Delta \\
                L_0
\end{pmatrix},
\end{equation*}
with $\Delta > 0$ the amount of retraction of the swing leg. This trajectory ensures that at the moment of lift-off the swing leg is immediately accelerating away from the floor, reaching the maximum retraction during the predicted mid-stance. At touchdown, the swing leg will have length $L_0$, corresponding to the (V-)SLIP model.

Defining the state vector of the form $x = (q,p)$ and writing \eqref{eq:vslip_dynamics} and \eqref{eq:knee_dynamics} in the standard form \eqref{eq:standard}, the following control strategy is proposed, extending the strategy formulated in Proposition~\ref{prop:swing_control}.

\begin{proposition}\label{prop:knee_control}
Given reference state trajectories $(q_2^\ast,\dot{q}_1^\ast,q_3^\ast,q_4^\ast)$, define the error functions
\begin{eqnarray*}
h_1 &= q_2^\ast - q_2, \\
h_2 &= \dot{q}_1^\ast - \dot{q}_1, \\
h_3 &= q_3^\ast - q_3, \\
h_4 &= q_4^\ast - q_4.
\end{eqnarray*}
During the double support phase, the corresponding control strategy formulated in Proposition~\ref{prop:vslip_control} renders solutions of \eqref{eq:vslip_dynamics} asymptotically converging to $(q_2^\ast,\dot{q}_1^\ast)$.

During the single support phase, the following control strategy renders solutions of \eqref{eq:knee_dynamics} asymptotically converging to $(q_2^\ast,q_3^\ast,q_4^\ast)$:
\begin{equation}\label{eq:knee_uss}
\left[ \begin{array}{c} u_1 \\ \tau \end{array} \right] = -A^{-1}
\left[ \begin{array}{c} \mathcal{L}_f^2 h_1 + \kappa_d \mathcal{L}_f h_1 + \kappa_p h_1 \\
                        \mathcal{L}_f^2 h_3 + \kappa_w \mathcal{L}_f h_3 + \kappa_a h_3 \\
                        \mathcal{L}_f^2 h_4 + \kappa_n \mathcal{L}_f h_4 + \kappa_\ell h_4
\end{array} \right],
\end{equation}
with
\begin{equation*}
A = \left[ \begin{array}{ccc} \mathcal{L}_{g_1} \mathcal{L}_f h_1 & \mathcal{L}_{g_2} \mathcal{L}_f h_1 & \mathcal{L}_{g_3} \mathcal{L}_f h_1 \\
                    \mathcal{L}_{g_1} \mathcal{L}_f h_3 & \mathcal{L}_{g_2} \mathcal{L}_f h_3  & \mathcal{L}_{g_3} \mathcal{L}_f h_3 \\
                    \mathcal{L}_{g_1} \mathcal{L}_f h_4 & \mathcal{L}_{g_2} \mathcal{L}_f h_4  & \mathcal{L}_{g_3} \mathcal{L}_f h_4
\end{array} \right].
\end{equation*}
For any arbitrary small $\varepsilon_1, \varepsilon_2, \delta_1, \delta_2 > 0$, there exist constants $\kappa_p,\kappa_d,\kappa_v,\kappa_a,\kappa_w,\kappa_\ell,\kappa_n > 0$ for the control strategy \eqref{eq:vslip_udse},~\eqref{eq:vslip_uds},~\eqref{eq:knee_uss} such that 
\begin{equation*} 
\lim_{t \to \infty} \vert q_2^\ast(t) - q_2(t) \vert < \varepsilon_1, \;\;\;\; \mbox{and} \;\;\;\; \lim_{t \to \infty} \vert \dot{q}_1^\ast(t) - \dot{q}_1(t) \vert < \varepsilon_2,
\end{equation*}
%\begin{equation*}
%\lim_{t \to \infty} \vert \dot{q}_1^\ast(t) - \dot{q}_1(t) \vert < \varepsilon_2,
%\end{equation*}
and, during $t_\mathrm{lo} \leq t < t_\mathrm{lo} + T_\mathrm{swing}$,
\begin{eqnarray*}
|q_3^\ast(t_\mathrm{lo} + T_\mathrm{swing}) - q_3(t_\mathrm{lo} + T_\mathrm{swing})| &< \delta_1 \\
|q_4^\ast(t_\mathrm{lo} + T_\mathrm{swing}) - q_4(t_\mathrm{lo} + T_\mathrm{swing})| &< \delta_2
\end{eqnarray*}
\end{proposition}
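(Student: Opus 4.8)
The plan is to follow the structure of the proofs of Propositions~\ref{prop:vslip_control} and \ref{prop:swing_control}, adding only the bookkeeping for the new swing-leg length coordinate $q_4$. The double support phase needs nothing new: the model reduces to the V-SLIP model of Section~\ref{sec:vslip}, so Proposition~\ref{prop:vslip_control} applies verbatim and yields $\ddot{h}_1 + \kappa_d \dot{h}_1 + \kappa_p h_1 = 0$ together with $\dot{h}_2 + \kappa_v h_2 = 0$ on each double support interval. All the genuinely new work is confined to the single support phase governed by \eqref{eq:knee_dynamics}.

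First I would show that the decoupling matrix $A$ in \eqref{eq:knee_uss} is nonsingular along the gait. Since $h_1,h_3,h_4$ depend on the configuration only and $\dot{q} = M^{-1}(q)p$ does not involve the inputs, each of these outputs has relative degree two, and $(u_1,\tau_1,\tau_2)$ first appears in $(\ddot{h}_1,\ddot{h}_3,\ddot{h}_4)$ through $M^{-1}(q)B$. One checks that $M(q) = Z^T M_0 Z$ is invertible whenever the tangent map $Z$ of \eqref{eq:knee_z}--\eqref{eq:knee_footpos} is, i.e. whenever $q_4 \neq 0$, and that the resulting $3\times 3$ matrix $A$ is then nonsingular provided in addition $0 < L_1 < L_0$ --- the stance-spring force modulated by $u_1$ and the two reaction wrenches exerted on the articulated swing leg by $\tau_1$ and $\tau_2$ being geometrically independent. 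Both conditions hold by construction: $L_1$ is kept out of $(L_0 - L_e, L_0)$ near the transition endpoints exactly as in the Remark following Proposition~\ref{prop:vslip_control}, where one reverts to \eqref{eq:vslip_udse}, and $q_4^\ast(t) \in [L_0 - \Delta, L_0]$ stays bounded away from zero.

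With $A$ invertible, the feedback \eqref{eq:knee_uss} strongly input--output decouples the single support dynamics, so on each single support interval
\begin{equation*}
\ddot{h}_1 + \kappa_d \dot{h}_1 + \kappa_p h_1 = 0, \quad
\ddot{h}_3 + \kappa_w \dot{h}_3 + \kappa_a h_3 = 0, \quad
\ddot{h}_4 + \kappa_n \dot{h}_4 + \kappa_\ell h_4 = 0,
\end{equation*}
while $h_2$ is uncontrolled (during single support the system is underactuated by one). Choose the gain pairs $(\kappa_p,\kappa_d)$, $(\kappa_a,\kappa_w)$, $(\kappa_\ell,\kappa_n)$ and the gain $\kappa_v$ so that every characteristic polynomial has its roots in the open left half-plane. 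Across the phase transitions the model changes and the momentum resets of Section~\ref{subsec:knee_phases} are applied, so $h_1$ obeys $\ddot{h}_1 + \kappa_d \dot{h}_1 + \kappa_p h_1 = e_1$ with $e_1$ a bounded, impulsive disturbance exactly as in the proof of Proposition~\ref{prop:swing_control}; exponential stability of the unforced equation then gives $\lim_{t\to\infty}|q_2^\ast(t) - q_2(t)| < \varepsilon_1$ for large enough $\kappa_p,\kappa_d$. For $h_2$, which is not actuated during single support, the bounded inputs --- $u_1$ bounded by \eqref{eq:u_limit} and the swing-leg torques bounded because $q_3^\ast,q_4^\ast$ are smooth with matched position boundary data at $t_\mathrm{lo}$ --- inject only a bounded amount of horizontal momentum, so $|h_2|$ at the next touchdown is bounded by a constant independent of $\kappa_v$, exactly as in the proof of Proposition~\ref{prop:vslip_control}; during the ensuing double support phase, whose duration stays bounded below along the nominal gait, $\dot{h}_2 + \kappa_v h_2 = 0$ contracts $h_2$, and a sufficiently large $\kappa_v$ makes the per-cycle update a contraction onto an $\varepsilon_2$-neighborhood of zero. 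Finally $h_3$ and $h_4$ live only on the finite interval $[t_\mathrm{lo},t_\mathrm{lo}+T_\mathrm{swing}]$, starting from $h_3(t_\mathrm{lo}) = h_4(t_\mathrm{lo}) = 0$ (the reference boundary conditions match the post-lift-off configuration) with bounded initial velocity errors, so increasing $\kappa_a,\kappa_w$ and $\kappa_\ell,\kappa_n$ drives the solutions at $t = t_\mathrm{lo}+T_\mathrm{swing}$ below $\delta_1$ and $\delta_2$ respectively.

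The main obstacle, as already in Proposition~\ref{prop:vslip_control}, is the uncontrolled coordinate $h_2$ during single support together with the impulsive disturbances at the transitions: one cannot merely invoke exponential stability of a linear error system but must show (i) that the disturbances fed into $h_1$ and $h_2$ per cycle are bounded uniformly in the gains, which relies on the input bound \eqref{eq:u_limit} and the finiteness of the single support duration, and (ii) that the controlled double support phase is long enough for the $\kappa_v$-decay to overcome the single-support drift, which requires the double support interval to stay bounded below along the gait. Verifying nonsingularity of the enlarged $3\times 3$ matrix $A$ over the whole swing is routine, but it is the one place where the knee geometry --- through $Z$ and $M(q)$ of \eqref{eq:knee_z}--\eqref{eq:knee_mm} --- enters essentially.
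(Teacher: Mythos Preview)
Your proposal is correct and matches the paper's approach: the paper itself states that the proof is analogous to that of Proposition~\ref{prop:swing_control} and omits it, and you have simply written out that analogous argument with the extra bookkeeping for $h_4$. Your added care about the invertibility of the $3\times3$ decoupling matrix $A$ (via $q_4\neq 0$ and $0<L_1<L_0$) goes slightly beyond what the paper spells out, but is in the same spirit as the corresponding check in the proof of Proposition~\ref{prop:vslip_control}.
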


The proof is analogous to the proof given in Section~\ref{subsec:swing_controller} and is omitted for brevity. The control strategy is validated through numeric simulations, with the same model parameters as used in Section~\ref{subsec:swing_controller}, and in addition $\kappa_\ell=1000$ and $\kappa_n=40$, with a leg retraction $\Delta = 7.5$~cm. The resulting error function plots are shown in Figure~\ref{fig:knee_errors}, and it can be seen that they converge as claimed in Proposition~\ref{prop:knee_control}. The error functions show now significantly bigger influences of the swing leg dynamics when compared to Figure~\ref{fig:swing_errors}.

\begin{figure}
\centering
\includegraphics{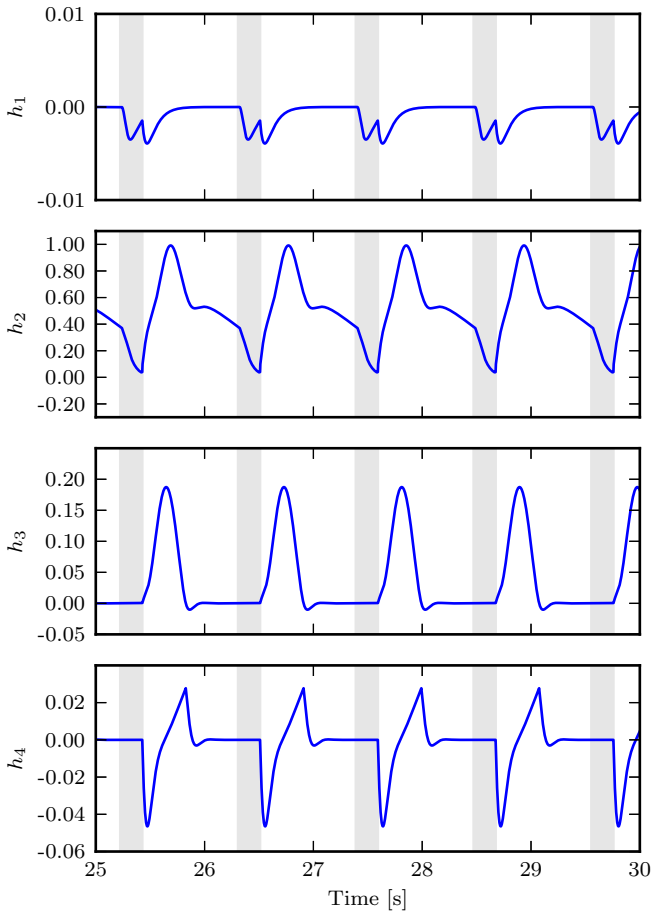}
\caption{Steady-state error functions for the controlled V-SLIP model with leg retraction---It can be seen that the error functions converge like claimed in Proposition~\ref{prop:knee_control}. Note that $h_3 \equiv 0$ and $h_4 \equiv 0$ during the double support phase.}
\label{fig:knee_errors}
\end{figure}

%%%%%%%%%%%%%%%%%%%%%%%%%%%%%%%%%%%%%%%%%%%%%%%%%%%%%%%%%%%%%%%%%%%%%%%%%%%%%%%%
\section{Comparison by Numerical Simulation}
\label{sec:simulation}

Starting from the bipedal SLIP model, three iterations of model refinement have been presented in Section~\ref{sec:vslip}, Section~\ref{sec:swing}, and Section~\ref{sec:knee}. Also, in the first iteration, a robust controller for leg stiffness has been presented in Section~\ref{subsec:vslip_controller}, which has been extended in subsequent iterations. In this section, a comparison of the performance of these controllers is presented.

\subsection{Comparison of Gait Control}

Figure~\ref{fig:hip_traj} shows the horizontal hip trajectory $q_1(t)$, and a detail of the corresponding vertical hip trajectory $q_2(t)$. The most notable difference between the three models is their average forward velocity, which is $1.18$~m/s for the V-SLIP model, but only $1.01$~m/s and $0.64$~m/s for the models including the swing leg dynamics and the knee. Looking at the hip trajectories, it can be seen that this is due to the inclusion of the swing leg dynamics, which introduces a lag in forward motion with respect to the V-SLIP dynamics.

The leg stiffness trajectories $k_i = k_0 + u_i$ are shown in Figure~\ref{fig:control}. Since the reference trajectory is the natural gait of the bipedal SLIP model, it is not surprising that the V-SLIP model requires very little control action to track the reference. The small amount that is required is due to the small mismatch between the parameterized reference trajectory and the true dynamics, as pointed out already in Section~\ref{subsec:vslip_controller}. The introduction of the swing leg dynamics introduces a significant disturbance to the V-SLIP dynamics, exemplified by the larger magnitude of the control inputs. It is interesting to note that the V-SLIP model with swing leg, as introduced in Section~\ref{sec:swing}, requires an impulse-like control input during the single support phase to counter the acceleration and deceleration of the swing leg. In contrast, including the leg retraction (Section~\ref{sec:knee}) results in a smaller moment of inertia, resulting in a smoother control. However, it is noted that the swing leg retraction does result in larger deceleration of the hip, which manifests itself in larger control inputs during the double support phase (bottom plot), actually reaching the lower bound of zero leg stiffness $k_2 = k_0 + u_2$ for short periods of time.

\begin{figure}
\centering
\includegraphics{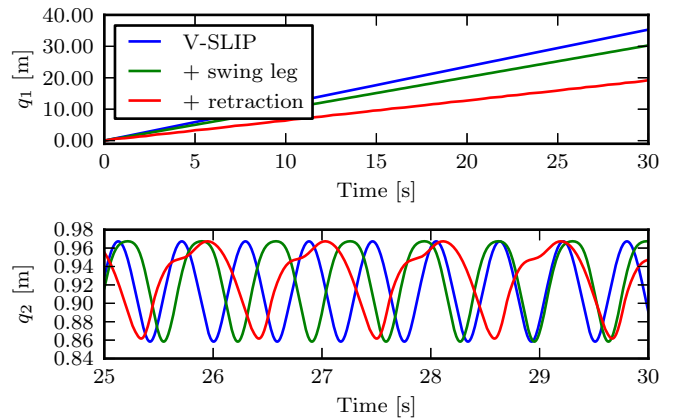}
\caption{Hip trajectories $q_1(t)$ and $q_2(t)$---It can be observed that swinging and retracting the leg has a negative influence on the forward velocity. This influence is particularly apparent in the vertical hip position trajectories.}
\label{fig:hip_traj}
\end{figure}

\begin{figure}
\centering
\includegraphics{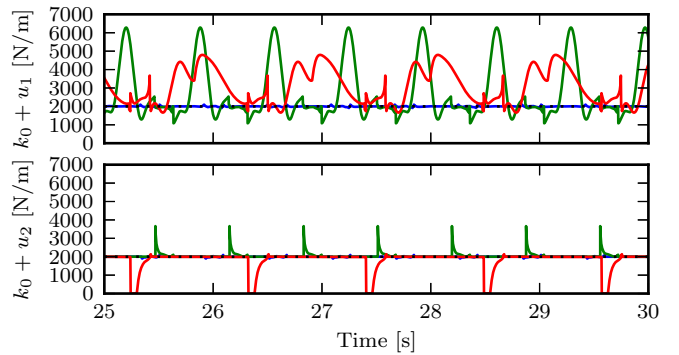}
\caption{Control inputs---See Figure~\ref{fig:hip_traj} for the legend. The controlled V-SLIP model hardly requires any control input to track the reference, which is by design of the control strategy. Adding the swing leg and retracting introduces a significant disturbance in the dynamics, and thus more control input. Note that $u_2 \equiv 0$ during the single support phase.}
\label{fig:control}
\end{figure}

\subsection{Energy Balance}

The natural gait of the bipedal SLIP model is associated to a constant energy level \cite{Geyer2006,Rummel2010b}. Since the V-SLIP model presented in Section~\ref{sec:vslip} matches the bipedal SLIP model, its energy balance is the same if the reference trajectory for the V-SLIP controller exactly matches the solutions of \eqref{eq:slip_dynamics}. However, as already noted before, the solutions of \eqref{eq:slip_dynamics} are not analytical, and therefore a small mismatch between the natural dynamics of the V-SLIP model and the reference is inevitable, resulting in small control action even in nominal conditions, as shown in Figure~\ref{fig:control}.

The energy balance for the model including the swing leg (Section~\ref{sec:swing}) is presented in Figure~\ref{fig:swing_energy}. It can be seen that the introduction of the swing leg dynamics introduces a significant deviation of the constant energy level of the bipedal SLIP model, indicated by the dashed line. The bulges in the kinetic energy plot clearly show the accelerating and the decelerating of the swing leg.

\begin{figure}
\centering
\includegraphics{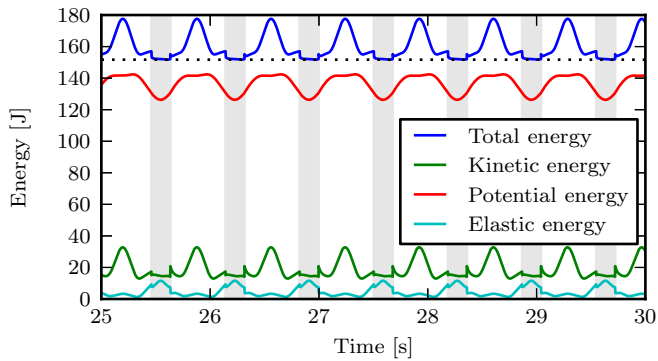}
\caption{Energy balance for the controlled V-SLIP model with swing leg---The dashed line indicates the constant energy level of the bipedal SLIP model. The influence of the swing leg is clearly seen in the bulges in the kinetic energy.}
\label{fig:swing_energy}
\end{figure}

The energy balance for the swing leg model with leg retraction (Section~\ref{sec:knee}) is shown in Figure~\ref{fig:knee_energy}. It clearly shows that the leg retraction further slows down the system, as exemplified by the lower total energy level when compared to Figure~\ref{fig:swing_energy}. However, it is also noted that the bulges observed in Figure~\ref{fig:swing_energy} have been reduced in amplitude in Figure~\ref{fig:knee_energy}. This is because the leg retraction results in a lower swing leg inertia, mitigating the influence of the swing of the leg.

\begin{figure}
\centering
\includegraphics{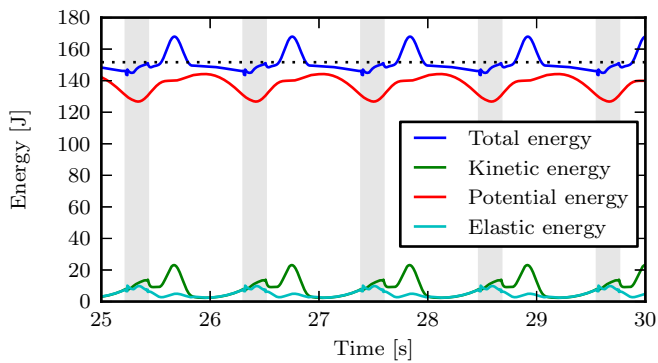}
\caption{Energy balance for the controlled V-SLIP model with leg retraction---The dashed line indicates the constant energy level of the bipedal SLIP model. Also here the influence of swinging and retracting the swing leg is clearly seen.}
\label{fig:knee_energy}
\end{figure}

Both Figure~\ref{fig:swing_energy} and Figure~\ref{fig:knee_energy} show relatively small variations in the total energy level. This signifies that only small amounts of energy are exchanged with the environment and via the control action.

\subsection{Cost of Transport}

Cost of transport (also known as specific resistance) is a measure of energy efficiency, as it measures the energy that a system uses to travel a specified distance \cite{Gregorio1997,Hobbelen2008}. Using the definition proposed in \cite{Hobbelen2008}, the cost of transport is obtained by exploiting the port-Hamiltonian formulation of the dynamic equations \eqref{eq:vslip_dynamics},~\eqref{eq:swing_dynamics},~\eqref{eq:knee_dynamics}:
\begin{equation}\label{eq:cot}
C = \frac{1}{m_\mathrm{total} g_0 \Delta x} \int_T | \langle u | y \rangle | \, \text{d} t, 
\end{equation}
where $m_\mathrm{total}$ denotes the total mass and $\Delta x$ the distance traveled during the time $T$. The cost $C$ captures the amount of energy required for walking the distance $\Delta x$, taking into account that, in general, actuators dissipate energy when negative work is done, rather than storing it.

Using \eqref{eq:cot}, we find $C = 3 \cdot 10^{-3}$ for the controlled V-SLIP model. The cost is not exactly zero due to the aforementioned mismatch between the reference trajectory and the true natural dynamics: if the reference had been exact, we would have $C \equiv 0$. For the model with swing leg (Section~\ref{sec:swing}), we obtain $C = 0.32$. For the model with the retracting swing leg (Section~\ref{sec:knee}), we obtain $C = 0.34$. While Figure~\ref{fig:knee_energy} hinted to lower energy expenditure when compared to Figure~\ref{fig:swing_energy}, this gain in efficiency is offset by the lower average velocity. The cost of transport $C = 0.34$ is in the same range as of human walking \cite{Gregorio1997}, and thus the proposed control strategy allows for a theoretical performance that  s approaches that of human walking.

%%%%%%%%%%%%%%%%%%%%%%%%%%%%%%%%%%%%%%%%%%%%%%%%%%%%%%%%%%%%%%%%%%%%%%%%%%%%%%%%
\section{Conclusions}
\label{sec:conclusions}

In this paper, we started from the bipedal SLIP model, and showed how active leg stiffness variation can render gaits of this model more robust. In particular, it was shown that the model could be extended to include swing leg dynamics, while still embedding the SLIP-like walking behavior by employing feedback control strategies. These control strategies are inspired by the capabilities of the human musculoskeletal capability of varying leg stiffness. It was shown that this approach yields a theoretical cost of transport that is comparable to human performance. This shows that active leg stiffness variation can be an important concept in human walking, which has not yet been transferred to the domain of realizing performant robotic walkers.

The starting point for the analysis presented in this work was the bipedal SLIP model, extended by variable leg stiffness to provide control inputs. Subsequent modeling iterations extended the model to include full swing leg dynamics, while at the same time the control strategy was extended to handle the refined models. The final result is a template model of a bipedal walker, based on the principles of the bipedal SLIP model, and a stabilizing controller that realizes an energetic performance level comparable to human walking. This model plus controller can serve as a basis for control of bipedal robots.

%%%%%%%%%%%%%%%%%%%%%%%%%%%%%%%%%%%%%%%%%%%%%%%%%%%%%%%%%%%%%%%%%%%%%%%%%%%%%%%%
\bibliographystyle{IEEEtran}
\bibliography{bb-clw}

% Generated by IEEEtran.bst, version: 1.13 (2008/09/30)
\begin{thebibliography}{10}
\providecommand{\url}[1]{#1}
\csname url@samestyle\endcsname
\providecommand{\newblock}{\relax}
\providecommand{\bibinfo}[2]{#2}
\providecommand{\BIBentrySTDinterwordspacing}{\spaceskip=0pt\relax}
\providecommand{\BIBentryALTinterwordstretchfactor}{4}
\providecommand{\BIBentryALTinterwordspacing}{\spaceskip=\fontdimen2\font plus
\BIBentryALTinterwordstretchfactor\fontdimen3\font minus
  \fontdimen4\font\relax}
\providecommand{\BIBforeignlanguage}[2]{{%
\expandafter\ifx\csname l@#1\endcsname\relax
\typeout{** WARNING: IEEEtran.bst: No hyphenation pattern has been}%
\typeout{** loaded for the language `#1'. Using the pattern for}%
\typeout{** the default language instead.}%
\else
\language=\csname l@#1\endcsname
\fi
#2}}
\providecommand{\BIBdecl}{\relax}
\BIBdecl

\bibitem{BostonDynamics2011}
{Boston Dynamics}, ``{PETMAN} - {BigDog} gets a big brother,'' online:
  http://www.bostondynamics.com/robot\_petman.html, 2011.

\bibitem{Collins2005}
S.~Collins, A.~Ruina, R.~Tedrake, and M.~Wisse, ``Efficient bipedal robots
  based on passive-dynamic walkers,'' \emph{Science}, vol. 307, no. 5712, pp.
  1082--1085, 2005.

\bibitem{Schwab2001}
A.~L. Schwab and M.~Wisse, ``Basin of attraction of the simplest walking
  model,'' in \emph{Proceedings of the ASME Design Engineering Technical
  Conference}, 2001.

\bibitem{Geyer2006}
H.~Geyer, A.~Seyfarth, and R.~Blickhan, ``Compliant leg behaviour explains
  basic dynamics of walking and running,'' \emph{Proceedings of the Royal
  Society B}, vol. 273, no. 1603, pp. 2861--2867, 2006.

\bibitem{Rummel2010b}
J.~Rummel, Y.~Blum, and A.~Seyfarth, ``Robust and efficient walking with
  spring-like legs,'' \emph{Bioinspiration and Biomimetics}, vol.~5, no.~4, p.
  046004, 2010.

\bibitem{Garofalo2012}
G.~Garofalo, C.~Ott, and A.~Albu-Sch{\"a}ffer, ``Walking control of fully
  actuated robots based on the bipedal {SLIP} model,'' in \emph{Proceedings of
  the IEEE International Conference on Robotics and Automation}, 2012, pp.
  1456--1463.

\bibitem{Ketelaar2013}
J.~G. Ketelaar, L.~C. Visser, S.~Stramigioli, and R.~Carloni, ``Controller
  design for a bipedal walking robot using variable stiffness actuators,'' in
  \emph{Proceedings of the IEEE International Conference on Robotics and
  Automation}, 2013, pp. 5650--5655.

\bibitem{vanderLinde1998}
R.~Q. van~der Linde, ``Active leg compliance for passive walking,'' in
  \emph{Proceedings of the IEEE International Conference on Robotics and
  Automation}, 1998, pp. 2339--2344.

\bibitem{Ghorbani2008}
R.~Ghorbani, ``On controllable stiffness bipedal walking,'' Ph.D. dissertation,
  University of Manitoba, 2008.

\bibitem{Jafarian2011}
M.~Jafarian, G.~van Oort, R.~Carloni, and S.~Stramigioli, ``Performance
  comparison of a planar bipedal robot with rigid and compliant legs,'' in
  \emph{Proceedings of the IFAC World Congress}, 2011, pp. 6924--6929.

\bibitem{Vanderborght2013}
B.~Vanderborght, A.~Albu-Schaeffer, A.~Bicchi, E.~Burdet, D.~Caldwell,
  R.~Carloni, M.~Catalano, O.~Eiberger, W.~Friedl, G.~Ganesh, M.~Garabini,
  M.~Grebenstein, G.~Grioli, S.~Haddadin, H.~Hoppner, A.~Jafari, M.~Laffranchi,
  D.~Lefeber, F.~Petit, S.~Stramigioli, N.~Tsagarakis, M.~V. Damme, R.~V. Ham,
  L.~Visser, and S.~Wolf, ``Variable impedance actuators: a review,''
  \emph{Robotics and Autonomous Systems, Elsevier}, vol.~61, pp. 1601--1614,
  2013.

\bibitem{Visser2012a}
L.~C. Visser, S.~Stramigioli, and R.~Carloni, ``Robust bipedal walking with
  variable leg stiffness,'' in \emph{Proceedings of the IEEE International
  Conference on Biomedical Robotics and Biomechatronics}, 2012, pp. 1626--1631.

\bibitem{Duindam2009b}
V.~Duindam, A.~Macchelli, S.~Stramigioli, and H.~Bruyninckx, \emph{Modeling and
  Control of Complex Physical Systems}.\hskip 1em plus 0.5em minus 0.4em\relax
  Springer, 2009.

\bibitem{Shiriaev2005}
A.~Shiriaev, J.~W. Perram, and C.~{Canudas-de-Wit}, ``Constructive tool for
  orbital stabilization of underactuated nonlinear systems: Virtual constraints
  approach,'' \emph{IEEE Transactions on Automatic Control}, vol.~50, no.~8,
  pp. 1164--1176, 2005.

\bibitem{Nijmeijer1990}
H.~Nijmeijer and A.~J. van~der Schaft, \emph{Nonlinear Dynamical Control
  Systems}.\hskip 1em plus 0.5em minus 0.4em\relax Springer, 1990.

\bibitem{Clewley2007}
R.~H. Clewley, W.~E. Sherwood, M.~D. LaMar, and J.~M. Guckenheimer,
  ``{PyDSTool}, a software environment for dynamical systems modeling,''
  online: http://pydstool.sourceforge.net, 2007.

\bibitem{Visser2013a}
L.~C. Visser, S.~Stramigioli, and R.~Carloni, ``Control strategy for energy
  efficient bipedal walking with variable leg stiffness,'' in \emph{Proceedings
  of the IEEE International Conference on Robotics and Automation}, 2013.

\bibitem{Duindam2008}
V.~Duindam and S.~Stramigioli, ``Singularity-free dynamic equations of
  open-chain mechanisms with general holonomic and nonholonomic joints,''
  \emph{IEEE Transactions on Robotics}, vol.~24, no.~3, pp. 517--526, 2008.

\bibitem{Duindam2009a}
------, \emph{Modeling and Control for Efficient Bipedal Walking Robots}.\hskip
  1em plus 0.5em minus 0.4em\relax Springer, 2009.

\bibitem{Gregorio1997}
P.~Gregorio, M.~Ahmadi, and M.~Buehler, ``Design, control, and energetics of an
  electrically actuated legged robot,'' \emph{IEEE Transactions on Systems,
  Man, and Cybernetics}, vol.~27, no.~4, pp. 626--634, 1997.

\bibitem{Hobbelen2008}
D.~G.~E. Hobbelen and M.~Wisse, ``Controlling the walking speed in limit cycle
  walking,'' \emph{International Journal of Robotics Research}, vol.~27, no.~9,
  pp. 989--1005, 2008.

\end{thebibliography}

\end{document}